\newtheorem{thm}{Theorem}[section]
\newtheorem{lem}[thm]{Lemma}
\newtheorem{defn}{Definition}[section]
\newtheorem{corollary}{Corollary}[section]
\title{Learning Smooth and Fair Representations}
\author{
  Xavier Gitiaux \\
  Department of Computer Science\\
  George Mason University\\
  Fairfax, VA 22030 \\
  \texttt{xgitiaux@gmu.edu} \\
   \And
 Huzefa Rangwala \\
  Department of Computer Science\\
  George Mason University\\
  Fairfax, VA 22030 \\
  \texttt{rangwala@cs.gmu.edu} \\
}
\begin{document}
\maketitle

\begin{abstract}
Organizations that own data face increasing legal liability for its discriminatory use against protected demographic groups, extending to contractual transactions involving third parties access and use of the data. This is problematic, since the original data owner cannot ex-ante anticipate all its future uses by downstream users. This paper explores the upstream ability to preemptively remove the correlations between features and sensitive attributes by mapping features to a fair representation space. Our main result shows that the fairness measured by the demographic parity of the representation distribution can be certified from a finite sample if and only if the chi-squared mutual information between features and representations is finite. Empirically, we find that smoothing the representation distribution provides generalization guarantees of fairness certificates, which improves upon existing fair representation learning approaches. Moreover, we do not observe that smoothing the representation distribution degrades the accuracy of downstream tasks compared to state-of-the-art methods in fair representation learning.
\end{abstract}

\keywords{Machine Learning \and Fairness \and Representations \and Neural Network}

\section{Introduction}
\label{introduction}
Organizations dealing with data could function as a \textbf{data controller} that determines the purposes and means of processing the data and/or as a \textbf{data processor} that processes the data on behalf of the controller. This distinction has legal ramifications, including challenges related to the discriminatory use of data. For example, the European Union's General Data Protection Regulation (\href{https://gdpr-info.eu/art-4-gdpr/}{GDPR, Article 4}) holds the data controller accountable for the collection, use and disposal of the data, including the responsibility of discriminatory on the basis of sensitive attributes (e.g. racial or ethnic origin, sexual orientation, (\href{https://gdpr-info.eu/recitals/no-71/}{GDPR, Recital 71})). 

However, ex-ante a data controller cannot anticipate what machine learning algorithms a data processor may use to perform its task. Therefore, any contract between a data controller and a data processor is likely to be incomplete and lacks sufficient instructions to guarantee the fairness of any data processor's application. This is problematic since a growing body of evidence has raised concerns about the fairness of machine learning outcomes across a wide range of  applications, including judicial decisions (\cite{ProPublica2016}), face recognition (\cite{pmlr-v81-buolamwini18a}), degree completion (\cite{gardner2019evaluating}) or medical treatment (\cite{pfohl2019creating}).

One promising avenue is for data controllers to limit the data access to fair representations of data instead of the data itself (e.g \cite{madras2018learning}, \cite{Creager2019FlexiblyFR}, \cite{edwards2015censoring}, \cite{pfohl2019creating} or \cite{zemel2013learning}). Fair representation learning seeks to map the original data distribution into a distribution that retains the information contained in the original data, while being statistically independent of sensitive attributes (see Figure \ref{fig:fair_rep}). 
However, current fair representation learning approaches provide fairness guarantees only against \textit{some} pre-specified data processors (\cite{chouldechova2018frontiers}). 
\emph{This paper explores conditions on the encoder to generate representation distributions with fairness guarantees that hold for any data processor}. 

\begin{figure}
\centering
\tikzset{every picture/.style={scale=0.775}}
    \begin{tikzpicture}
\small
\draw (0.5, 0) circle (.5cm) node[anchor=center] {$\mathbf{X}$};
\draw[thick, ->, line width=0.5mm] (1, 0.0) -- (1.75, 0);
\draw (1.75, -0.5) rectangle (4.25,0.5) node[pos=0.5] {Encoder $t(\mathbf{X})$};
\draw[->, line width=0.5mm] (3, -1.5) -- (1, 0);
\draw (1.75, -1.5) rectangle (4.25, -2.5) node[pos=0.5] {Decoder $\textstyle g(\mathbf{Z})$};
\draw (5.5, 0) circle (.5cm) node[anchor=center] {$\mathbf{Z}$};
\draw[thick, ->, line width=0.5mm] (4.25, 0) -- (5.0, 0);
\draw[thick, ->, line width=0.5mm] (5, 0) -- (3.0, -1.5);
\draw (6.75, -0.5) rectangle (9.25,0.5) node[pos=0.5] {Auditor $a(\mathbf{Z})$};
\draw[thick, ->, line width=0.5mm] (6, 0) -- (6.75, 0);
\draw (10.5, 0) circle (.5cm) node[anchor=center] {$\mathbf{S}$};
\draw[thick, ->, line width=0.5mm] (9.25, 0) -- (10.0, 0);
\draw[red] (5.5, -2) circle (.5cm) node[anchor=center] {$\epsilon$};
\draw[red, thick, ->, line width=0.5mm] (5.5, -1.5) -- (5.5, -0.5);
\draw[red](5.5, 2) circle (.5cm) node[anchor=center] {$\textstyle h_{i}(Z)$};
\draw[red, thick, ->, line width=0.5mm] (5.5, 0.5) -- (5.5, 1.5);
\draw[red](3.5, 2) circle (.5cm) node[anchor=center] {$\textstyle h_{1}(Z)$};
\draw[red, thick, ->, line width=0.5mm] (5.5, 0.5) -- (3.5, 1.5);
\draw[red](7.5, 2) circle (.5cm) node[anchor=center] {$\textstyle h_{N}(Z)$};
\draw[red, thick, ->, line width=0.5mm] (5.5, 0.5) -- (7.5, 1.5);
\draw[red, loosely dotted, line width=0.5mm](4, 2) -- (5, 2);
\draw[red, loosely dotted, line width=0.5mm](6, 2) -- (7, 2);
\draw[red] (0, 1.5) rectangle (10.5, 2.5) node[pos=0.5]{};
\node[red] at (1, 2.25){Downstream};
\node[red, align=left] at (1, 1.75){Processors:};
\end{tikzpicture}
    \caption{Fair representation learning. Variables are: features $\mathbf{X}$; sensitive attribute $\mathbf{S}$; representation $\mathbf{Z}$. The standard fair representation protocol includes an encoder $t$ that maps $X$ to its representation $Z$; a decoder $g$ that reconstructs $X$ from $Z$; and, an auditor $a$ that measures the statistical dependence between $Z$ and $S$; many downstream data processors $h_{1}$,...., $h_{N}$ that uses the representation $Z$. The contribution of this paper is to add an additive Gaussian white noise (AWGN) channel -- i.e, a convolution step that adds a Gaussian noise $\epsilon$ to $t(\mathbf{X})$ -- so that fairness guarantees can be established for all data processors $h$ using $Z$.}
    \label{fig:fair_rep}
\end{figure}
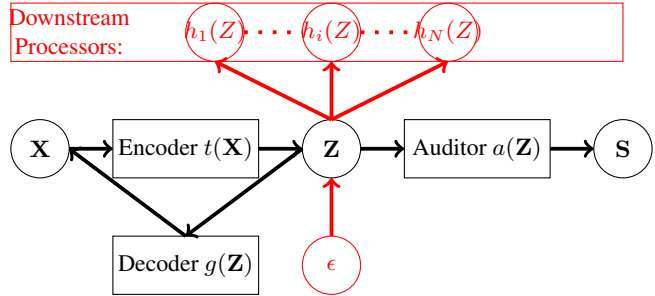

Data controllers would like to produce fairness certificates that measure the potential unfairness of all downstream data processors who access samples from the representation distribution. The question is whether fairness certificates can be approximated by empirical certificates estimated from a finite sample. Our main result shows that for this approximation property of empirical fairness certificates to hold, it is necessary for a measure of information -- the $\chi^{2}$ mutual information -- between feature and representation to be finite. 
Moreover, we prove that a finite $\chi^{2}$ mutual information between feature and representation is a sufficient condition on representation mappings to guarantee a good approximate rate ($O(n^{-1/2})$) of empirical certificates. 

In practice, it is challenging to establish that the $\chi^{2}$ mutual information is finite without knowing the distribution over $\mathcal{X}$. However, we show that an additive Gaussian white noise (AWGN) channel placed after any representation mapping (see Figure \ref{fig:fair_rep}) will bound the $\chi^{2}$ mutual information once the representations have passed through the channel. The channel smoothes the representation distribution by transforming it into a mixture of Gaussian distributions that can be estimated by Monte Carlo integration (\cite{goldfeld2019convergence}). Therefore, a plug-in fairness auditor that relies on estimating the class conditional density functions over the representation space achieves a convergence rate of $O(n^{-1/2})$. Moreover, the AWGN channel offers the possibility to learn high dimensional representations without the need to resort to adversarial auditors that learn to predict the sensitive attribute from samples of the representation distribution (e.g. \cite{madras2018learning} or \cite{edwards2015censoring}). Instead, we approximate the empirical certificate with a differentiable loss that is computed by Monte Carlo integration.

We empirically find on various synthetic  and fair learning benchmark datasets that an AWGN channel in fair representation learning is sufficient for empirical certificates to upper bound the demographic parity of multiple downstream users that attempts to predict sensitive attributes from samples of the representation distribution. An AWGN channel improves upon existing approaches in adversarial fair representation learning whose fairness guarantees do not extend beyond a set of specific downstream users. Moreover, we did not find strong evidence that obtaining good approximation rates for empirical certificates comes at the cost of significantly degrading the accuracy-fairness trade-off of downstream predictive tasks. 


\textbf{Related work.} A growing literature explores the potential adverse implications that machine learning algorithms 
might have on protected demographic groups (e.g individuals self-identified as Female or African-American) (\cite{chouldechova2018frontiers} for a review). Many contributions seek to define fairness criteria either at the group or individual level (\cite{dwork2012fairness}) and then, impose a fairness penalty into their classification algorithm (e.g. \cite{agarwal2018reductions}, \cite{kim2018fairness}, \cite{kearns2018preventing}) or audit for a specific criteria (e.g \cite{feldman2015certifying}, \cite{Gitiaux2019mdfaMF}). In this paper, we side-step the important discussion on what fairness criteria to choose from (\cite{kleinberg2016inherent}), but investigate whether a data can be transformed so that any future use will meet a pre-specified criteria. Our results focus on demographic parity (\cite{dwork2012fairness}), but can be readily extended to many other group level criteria, including equalized odds and equal opportunity (\cite{hardt2016equality}). 

Existing pre-processing methods to mitigate unfair data use include sampling and reweighting (e.g. \cite{calders2013unbiased}, \cite{gordaliza2019obtaining}), optimization procedures  to learn a data transformation that both preserve utility and limit discrimination (e.g. \cite{calmon}), and representation learning (e.g. \cite{zemel2013learning}). Representation learning seeks to encode the data while removing correlations between features and sensitive attributes. Recent developments in adversarial learning for generative modeling (see \cite{kurach2018largescale} for a survey) or domain adaptation (e.g. \cite{ganin2016domain}) have spurred an interest in training a data encoder to generate a representation of the data and fool a neural network that attempts to predict sensitive attributes from samples of the representation distribution (e.g. \cite{edwards2015censoring}, \cite{madras2018learning}, \cite{zhang2018mitigating} or \cite{xu2018fairgan}). An alternative approach is to disentangle sensitive attributes from features by passing the data through an information bottleneck (\cite{louizos2015variational} or \cite{Creager2019FlexiblyFR}). 

Our contribution to the fair adversarial learning literature is to explore conditions so that the learned representation offers fairness guarantees against adversaries that do not necessarily belong to the same class as the adversary used during the training of the encoder. 
\cite{madras2018learning} and \cite{oneto2019learning} explore empirically whether representations that achieve demographic parity for a specific downstream task generalize to new tasks in terms of accuracy and fairness. We extend their work by showing theoretically and empirically that introducing an AWGN channel in fair representation learning offers generalization guarantees to all future tasks.

Similar to our approach, the differential privacy literature relies on noise injection to guarantee that two neighboring datasets are indistinguishable (\cite{dwork2014algorithmic}).  However, there is an important difference between these approaches. In the context of differential privacy, indistinguishability is only obtained by adding Gaussian/Laplacian noise. In our fairness context, for a finite sample, statistical hiding comes from learning representations subject to a demographic parity constraint; the injection of Gaussian noise is only a means to generalize the statistical hiding property to the infinite sample regime. Of interest is whether this two-step approach has merits in a privacy setting.

\section{Certifying Fair Representations}
\label{sec: 2}
\subsection{Background}

Consider a data controller who wants to release  samples from a distribution $\mu$ over  $\mathcal{X}\times \mathcal{S}$ with features in $\mathcal{X}\subset[0, 1]^{D}$   and sensitive attributes in $\mathcal{S}$. Although our setup can be extended to richer spaces of sensitive attributes, we focus here on binary sensitive attributes and assume that $\mathcal{S}=\{0, 1\}$. 

A transformation $t$ that maps the features space $\mathcal{X}$ into a representations space $\mathcal{Z}\subset\mathbb{R}^{d}$ induces a distribution $\mu_{t}$ over $\mathcal{Z}\times \{0, 1\}$:  $\mu_{t}(A)=\mu\left(\{x\in \mathcal{X}| t(x)\in A\}\right)$ for any $A\subset \mathcal{Z}$. 

The data controller's objective is to obtain a representation mapping $t$ that minimizes the statistical dependence between representation $Z$ and sensitive attribute $S$. Therefore, for any test $f:\mathcal{Z}\rightarrow \{0, 1\}$ that decides whether the class conditional distributions $\mu_{t}^{0}=P(Z|S=0)$ and $\mu_{t}^{1}=P(Z|S=1)$ are identical, the data controller would like to minimize the discrepancy
\begin{equation}
    \Delta(f, t) \triangleq |E_{z\sim \mu_{t}^{1}}[f(z)] - E_{z\sim \mu_{t}^{0}}[f(z)]|,
\end{equation}
where we make the dependence of $\Delta$ on representation mapping $t$ explicit. In the context of fair machine learning, the test function $f$ is either an auditor used by the data controller to estimate the statistical dependence between $Z$ and $S$ (function $a$ in Figure \ref{fig:fair_rep}); or, a classifier used by a data processor (function $h$ in Figure \ref{fig:fair_rep}) and $\Delta(f, t)$ then measures the demographic parity of $f$ (see \cite{hardt2016equality}):

\begin{defn}{{\bf Demographic parity}}
Consider a representation distribution $\mu_{t}$ induced by a representation mapping $t:\mathcal{X}\rightarrow\mathcal{Z}$. A classifier $f:\mathcal{Z}\rightarrow \{0,1\}$ used by a data processor satisfies $ \delta-$ Demographic Parity on $\mu_{t}$ if and only if $\Delta(f, t) \leq \delta.$
\end{defn}
Since the data controller does not know ex-ante which classifier data processors will use, she has to construct a mapping $t$ such that all classifiers $f:\mathcal{Z}\rightarrow \{0,1\}$ satisfy $ \delta-$ demographic parity  on $\mu_{t}$ for some pre-specified $\delta > 0$. A demographic parity certificate is therefore an upper bound on the demographic disparity of any classifiers that access samples from the representation distribution $\mu_{t}$.
 

\begin{defn}{\bf Demographic Parity Certificate} Let $\delta \geq 0$. A representation space $(\mathcal{Z}, \mu_{t})$ can be certified with $\delta-$ demographic parity if and only if
\begin{equation}
    \label{eq: cert}
    \Delta^{*}(t)\triangleq \sup\limits_{f:\mathcal{Z}\rightarrow \{0, 1\}}\Delta(f, t) \leq \delta.
\end{equation}
\end{defn}


To construct  a representation mapping certified with $\Delta^{*}(t)-$ demographic parity, the  data controller needs to evaluate the supremum over all test functions/auditors $f_{n}$ that are constructed on the basis of a finite sample $\mathcal{D}_{n}=\{(x_{i}, s_{i})\}_{i=1}^{n}$. Let $\mathcal{F}_{n}$ denote the set of all auditors $f_{n}: \mathcal{Z} \times (\mathcal{Z} \times \{0, 1\})^{n}\rightarrow \{0, 1\}$ constructed from a sample of size $n$.


\begin{defn}{\bf Empirical Demographic Parity Certificate} Let $n\geq 1$ and $\delta \geq 0$. A representation space $(\mathcal{Z}, \mu_{t})$ is certified with an empirical $\delta-$ demographic parity certificate if and only if
\begin{equation}
    \label{eq: cert_n}
   \Delta_{n}(t)\triangleq \sup\limits_{f_{n}:\in \mathcal{F}_{n}}\Delta(f_{n}, t) \leq \delta.
\end{equation}
\end{defn}
This paper investigates how to choose a representation mapping $t:\mathcal{X}\rightarrow{Z}$ so that empirical certificates are good approximations of the true demographic parity certificate, i.e. $\Delta_{n}(t)$ approximates well $\Delta^{*}(t)$. Approximation properties of empirical certificates are important for a data controller to anticipate the demographic parity of a downstream processor who uses fresh samples obtained after $t$ has been constructed. 

Since the data controller cannot constrain the data distribution over $\mathcal{X}\times \{0, 1\}$, we are looking for distribution-free approximation rates. In general, distribution-free rates do not exist (\cite{devroye2013probabilistic}, ch. 7). But, in our setting, the data controller has some control over the representation distribution via $t$. In fact, the approximation $\Delta^{*}(t) -\Delta_{n}(t)$ depends on how much information in $X$ is encoded by $t$ in $Z$. If $t$ randomly maps $\mathcal{X}$ to $\mathcal{Z}$, the data controller can certify $\mu_{t}$ with $0-$ demographic parity, but $\mu_{t}$ is useless to downstream data processors. The data controller trades-off representation demographic parity with information by learning a representation mapping $t:\mathcal{X}\rightarrow \mathcal{Z}$ and a decoder function $g:\mathcal{Z}\rightarrow \mathcal{X}$ that solves the following fair empirical representation problem
\begin{equation}
    \label{eq: fair_rep_prob}
    \min_{t, g} \mathcal{L}_{rec}(g, t, \mathcal{D}_{n})  \text{ subject to }  \Delta_{n}(t)\leq \delta,
\end{equation}
where $\delta >0$ is a pre-specified demographic parity threshold and $\mathcal{L}_{rec}$ is a reconstruction loss whose choice depends on the data. 

\subsection{Necessary Condition}
This section identifies a necessary condition on deterministic representation mapping $t$ for the induced empirical demographic parity certificate to approximate $\Delta^{*}(t)$ well. The necessary condition bounds the amount of information measured by the $\chi^{2}$ mutual information between feature $X$ and representation $Z$:
\begin{equation}
    I_{\chi^{2}}(X, Z) \triangleq E_{x}E_{z}\left(\frac{\mu_{t}(z) - \mu_{t}(Z|X=x)}{\mu_{t}(z)}\right)^{2}.
\end{equation}
The $\chi^{2}$ mutual information relies on a statistical distance, the $\chi^{2}-$divergence
\begin{equation}\nonumber
\chi^{2}(Z, Z|X)=\int_{z}\left(dP(Z|X)/dP(Z) -1\right)^{2}dP(Z)
\end{equation}
to average the distance between $Z$ and $Z|X=x$ for $x\in \mathcal{X}$. It has been used in information theory to estimate the information that flows through a neural network (see \cite{goldfeld2019convergence}). In the context of fair representation learning, we find that empirical demographic parity certificates cannot provide good approximations of the representation's true demographic parity if the $\chi^{2}$ input-output mutual information is large:

\begin{thm}
\label{th: 1}
Let $n\geq 1$. Consider a representation function $t:\mathcal{X}\rightarrow\mathcal{Z}$. Then, 
\begin{equation}
    \label{eq: chi_necessary}
    \inf\limits_{f_{n}\in \mathcal{F}_{n}}\sup\limits_{\mu} E_{\mathcal{D}_{n}}|\Delta^{*}(t) - \Delta(f_{n}, t) |- \left(1 -\frac{1}{I_{\chi^{2}}(X,Z)}\right)^{n}\geq 0. 
\end{equation}
\end{thm}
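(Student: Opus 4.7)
The plan is to establish this minimax lower bound by a Bayesian hypothesis-testing argument of Le Cam type, exhibiting a family of ``hard'' distributions parameterized by rare events of mass $1/I_{\chi^2}$.

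First I would invoke the identity $\Delta^*(t)=TV(\mu_t^0,\mu_t^1)$: the supremum of $|E_{\mu_t^1}f - E_{\mu_t^0}f|$ over binary $f$ is exactly the total variation distance. Because $f_n(\cdot;\mathcal{D}_n)$ is, for each realization, one particular binary test, we always have $\Delta(f_n,t)\leq \Delta^*(t)$ pointwise, so the absolute value is redundant and the minimax quantity equals $\inf_{f_n}\sup_\mu\bigl[\Delta^*(t) - E_{\mathcal{D}_n}\Delta(f_n,t)\bigr]$. This reduces the claim to constructing, for every $f_n$, a distribution $\mu$ under which no auditor can track the supremum.

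For the hard construction, fix $t$ and choose $K=\lceil I_{\chi^2}(X,Z)\rceil$ pairwise disjoint subsets $C_1,\ldots,C_K\subset\mathcal{X}$ with pairwise disjoint $t$-images $t(C_k)$. Define $K$ candidate distributions $\mu_k$ as follows: the marginal of $X$ places mass $1/K$ on each cell, and under $\mu_k$ the sensitive attribute is $S=\mathbb{1}[X\in C_k]$. A direct calculation using $\chi^2(P(Z|X=x)\|P(Z)) = 1/P(Z=t(x)) - 1$ for deterministic $t$ pins the $\chi^2$ mutual information at (essentially) $I_{\chi^2}$, and the disjointness of the $t(C_k)$'s forces $\mu_{t,k}^0$ and $\mu_{t,k}^1$ to be mutually singular, giving $\Delta^*(t)=1$ under each $\mu_k$. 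The only classifier realizing this supremum is the indicator of $t(C_k)$, so the auditor must effectively identify $k$ from data.

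I would then bound the Bayes risk under the uniform prior $\pi$ over $k\in\{1,\ldots,K\}$. For any auditor $f_n$, let $E_k=\{\mathcal{D}_n\cap C_k = \emptyset\}$; independence of the sample gives $P_{\mu_k}(E_k) = (1-1/K)^n \geq (1-1/I_{\chi^2})^n$. Conditional on $E_k$, the law of $\mathcal{D}_n$ is invariant under permutations of the unsampled cells, so the classifier $A(\mathcal{D}_n)$ chosen by any measurable $f_n$ is exchangeable across those cells. A symmetrization argument then shows that, under the prior $\pi$ restricted to the unsampled cells, $E_\pi E_{\mathcal{D}_n|k}[\Delta(f_n,t)\mathbb{1}_{E_k}]$ is at most $P(E_k)\cdot o(1)$, so $\Delta^*(t) - E\Delta(f_n,t) \geq (1-1/I_{\chi^2})^n - o(1)$ after averaging. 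Using $\sup_\mu \geq E_\pi$ concludes the proof.

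The main obstacle is the calibration in the third step: the construction must yield the exact leading factor $1$ (not the $\tfrac{1}{2}$ that a naive two-point Le Cam argument would give) in front of $(1-1/I_{\chi^2})^n$. This requires that, on $E_k$, the auditor denied information about $k$ is unable to do better than chance across the prior --- so that the error $1 - E\Delta$ averages essentially to $1$ rather than to a constant fraction. Making this clean depends on exploiting both the deterministic singular-support structure of the $\mu_{t,k}^{s}$'s (so that the only way to reach $\Delta\approx 1$ is to place $A$ precisely on $t(C_k)$) and the exact symmetry of the prior, together with a careful choice of the cell probabilities to ensure $I_{\chi^2}(X,Z)$ is realized with the right constant. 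A secondary subtlety is reconciling $\lceil I_{\chi^2}\rceil$ with the continuous RHS, handled by passing to a randomization or a limit.
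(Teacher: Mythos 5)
Your high-level strategy — lower bound the minimax gap by exhibiting a Bayes prior over hard distributions where $S$ is a deterministic function of $t(X)$, so that $\Delta^*(t)=1$, and then argue that the auditor is blind on cells it hasn't sampled — is the same as the paper's. Your identification of $\Delta^*(t)=TV(\mu_t^0,\mu_t^1)$ and the observation that $\Delta(f_n,t)\leq\Delta^*(t)$ pointwise are also correct and appear (in BER form) in the paper's Lemma \ref{lem: ber}. But the specific construction you propose is genuinely different from the paper's, and I think it puts you in a harder position.

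You define a prior over $K$ hypotheses $\mu_k$ with $S=\mathbb{1}[X\in C_k]$. This is a classical multiple-hypothesis (Fano-type) set-up, but it creates two problems. First, $P(S=1)=1/K$ is tiny, so the class-conditional densities are extremely imbalanced, and computing $\Delta(f_n,t)$ for an arbitrary data-dependent $f_n$ requires tracking exactly which cells $f_n$ labels one; your ``exchangeability/symmetrization'' step is left unexecuted, and it is not at all automatic that the Bayes-averaged $\Delta(f_n,t)$ on the event $E_k$ is $o(1)$ --- the auditor can still get lucky, with probability roughly $1/|U|$ (the number of unsampled cells), and realize $\Delta\approx 1$, so you would need to control that term carefully. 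Second, you explicitly worry (correctly) that this style of argument naturally produces a $\tfrac12$ in front of $(1-1/K)^n$, but you do not say how to remove it.

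The paper avoids both problems by a different choice of prior: instead of $K$ one-hot hypotheses, it indexes the hypotheses by $b\in[0,1]$ and sets the sensitive attribute on cell $k$ equal to the $k$-th binary digit of $b$. Under the uniform prior on $b$, the sensitive attributes across cells are i.i.d.\ $\mathrm{Bernoulli}(1/2)$. Consequently, conditioned on the event that the query point lies in a cell that was never sampled, the true $S$ is a fair coin independent of everything the auditor has observed, so \emph{every} auditor misclassifies with conditional probability exactly $1/2$ --- no exchangeability argument or ``best-guess'' analysis is needed. This yields $E[BER(f_n,t)]\geq\tfrac12(1-1/K)^n$ directly. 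The factor-of-$\tfrac12$ worry is then resolved not by changing the construction but by the BER-to-$\Delta$ conversion: since $\Delta(f,t)=1-2BER(f,t)$ (after replacing $f$ by $1-f$ if necessary to make $BER\leq 1/2$) and $BER^*=0$ here, one has $\Delta^*(t)-\Delta(f_n,t)=2BER(f_n,t)$, and the doubling exactly cancels the $\tfrac12$.

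One other small point: you suggest that ``a careful choice of the cell probabilities'' is needed ``to ensure $I_{\chi^2}(X,Z)$ is realized with the right constant.'' In fact, for a deterministic $t$ whose image has exactly $K$ distinct values, the paper's Lemma \ref{lem: chi2} shows $I_{\chi^2}(X,Z)=K-1$ for \emph{any} feature distribution that puts positive mass on every cell; the chi-squared mutual information is determined by $K$ alone and is insensitive to the cell weights, so this worry is unnecessary.

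In short: the proof skeleton is right, but the one-hot $\mu_k$ construction leaves a real gap in the symmetrization step and does not explain how to recover the leading constant; the paper's binary-expansion prior together with the $\Delta=1-2BER$ identity does both cleanly.
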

We prove the result in Theorem \ref{th: 1} for any representation function and conjecture, but have not proved it, that it still holds true for any representation mapping. 
Encoding more information of $X$ in $Z$ exposes the representation distribution $\mu_{t}$ to mirroring distributions over $\mathcal{X}$ with heavy tails. Intuitively, $\mu_{t}$ is a (possibly infinite) mixture of conditional distributions $P(Z|X=x)$ for $x\in \mathcal{X}$ and $I_{\chi^{2}}(X, Z)$ measures an average distance between those conditional distributions. As $I_{\chi^{2}}(X, Z)$ increases, the conditional distributions $P(Z|X=x)$ become far apart for a growing mass of $x\in \mathcal{X}$. It generates a representation distribution too complex for a finite sample to represent it and for an auditor $f_{n}$ to detect all the correlations between representation and sensitive attribute. 

Theorem \ref{th: 1} implies a trade-off between the information passed from feature to representation and the approximation rate of empirical demographic parity certificates: 

\begin{corollary}
\label{cor: rates}
With the notations from Theorem \ref{th: 1}, suppose that 
\begin{equation}
    \nonumber
    \inf\limits_{f_{n}\in \mathcal{F}_{n}}\sup\limits_{\mu} E_{\mathcal{D}_{n}}|\Delta^{*}(t) - \Delta(f_{n}, t) | \leq \epsilon_{n},
\end{equation}
then for all distributions over the feature space $\mathcal{X}$,
\begin{equation}
    I_{\chi^{2}}(X, Z) \leq \frac{1}{1-\epsilon_{n}^{\frac{1}{n}}}.
\end{equation}
\end{corollary}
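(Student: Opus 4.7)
The plan is to derive the corollary as a direct algebraic consequence of Theorem~\ref{th: 1}. The theorem supplies the lower bound
\[
\inf_{f_n\in\mathcal{F}_n}\sup_{\mu} E_{\mathcal{D}_n}|\Delta^{*}(t)-\Delta(f_n,t)|\;\ge\;\left(1-\frac{1}{I_{\chi^{2}}(X,Z)}\right)^{n},
\]
and the corollary's hypothesis states that this same left-hand side is at most $\epsilon_n$. Chaining the two inequalities gives
\[
\left(1-\frac{1}{I_{\chi^{2}}(X,Z)}\right)^{n}\;\le\;\epsilon_n.
\]
From here the argument is purely arithmetic: I first dispose of the degenerate case $I_{\chi^{2}}(X,Z)\le 1$, in which the claimed bound $I_{\chi^{2}}(X,Z)\le 1/(1-\epsilon_n^{1/n})$ holds trivially because the right-hand side is at least $1$ whenever $\epsilon_n\in[0,1]$.

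In the remaining case $I_{\chi^{2}}(X,Z)>1$, both sides of the displayed inequality are nonnegative, so I can apply the strictly increasing map $u\mapsto u^{1/n}$ on $[0,\infty)$ to obtain
\[
1-\frac{1}{I_{\chi^{2}}(X,Z)}\;\le\;\epsilon_n^{1/n}.
\]
Rearranging yields $1/I_{\chi^{2}}(X,Z)\ge 1-\epsilon_n^{1/n}$, and since the right-hand side is strictly positive (the case $\epsilon_n\ge 1$ makes the target bound vacuous and can be excluded), inversion preserves the direction of the inequality and gives exactly
\[
I_{\chi^{2}}(X,Z)\;\le\;\frac{1}{1-\epsilon_n^{1/n}},
\]
which holds for every feature distribution because Theorem~\ref{th: 1}'s bound was taken as a supremum over $\mu$.

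There is essentially no substantive obstacle here since Theorem~\ref{th: 1} does all the work; the only care required is bookkeeping about which regime of $I_{\chi^{2}}$ and $\epsilon_n$ the argument applies to, so that the monotonicity of the $n$-th root and of reciprocation on positive numbers is legitimately invoked and no vacuous or trivially-satisfied case is mistakenly treated as informative.
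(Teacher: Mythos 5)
Your proposal is correct and follows essentially the same route as the paper: chain the lower bound from Theorem \ref{th: 1} with the hypothesized upper bound $\epsilon_{n}$, take $n$-th roots, and rearrange to isolate $I_{\chi^{2}}(X,Z)$. Your extra bookkeeping for the degenerate regimes ($I_{\chi^{2}}(X,Z)\leq 1$ or $\epsilon_{n}\geq 1$) is a minor refinement the paper's proof glosses over, but it does not change the argument.
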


 The smaller the approximation rate $\epsilon_{n}$ is, the smaller is the upper bound on the $\chi^{2}$-mutual information between $X$ and $Z$.  For the approximation rate of $\Delta^{*}(t) - \Delta(f_{n}, t)$ to be $O(n^{-s})$ for some $s > 0$, it is necessary for the $\chi^{2}$ mutual information between feature and representation to be bounded above by $O(n/(s\ln(n))$ for \emph{all} distributions over $\mathcal{X}$. On the other hand, representation functions $t$ for which the $\chi^{2}$ mutual information is infinite for some distribution over the features space, never guarantee a meaningful approximate rate between $\Delta^{*}(t)$ and $\Delta_{n}(f_{n}, t)$ for any auditor $f_{n}$: 

\begin{corollary}
\label{cor: 1}
Let $n\geq 1$. Consider a representation function $t:\mathcal{X}\rightarrow\mathcal{Z}$. Suppose that there exists a distribution over $\mathcal{X}$ such that $I_{\chi^{2}}(X, Z)=\infty$. Then, 
\begin{equation}
    \label{eq: chi_necessary_div}
    \inf\limits_{f_{n}\in \mathcal{F}_{n}}\sup\limits_{\mu} \Delta^{*}(t) - \Delta(f_{n}, t)\geq 1. 
\end{equation}
\end{corollary}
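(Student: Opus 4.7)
My plan is to derive Corollary \ref{cor: 1} as a direct, almost immediate, consequence of Theorem \ref{th: 1}, using only a limiting argument and a boundedness observation. The strategy has three short steps.

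First I would apply Theorem \ref{th: 1} to a witnessing distribution. By hypothesis there is some distribution $\mu_{0}$ over $\mathcal{X}$ for which $I_{\chi^{2}}(X,Z)=\infty$ (where $Z=t(X)$). Extend $\mu_{0}$ arbitrarily to a joint distribution on $\mathcal{X}\times\{0,1\}$; since $I_{\chi^{2}}(X,Z)$ depends only on the marginal over $\mathcal{X}$ and on $t$, the extension does not affect the hypothesis. The bound in Theorem \ref{th: 1}, specialized to this $\mu_{0}$, reads
\begin{equation*}
\inf_{f_{n}\in\mathcal{F}_{n}} E_{\mathcal{D}_{n}\sim\mu_{0}^{n}}\!\bigl[|\Delta^{*}(t)-\Delta(f_{n},t)|\bigr] \;\ge\; \Bigl(1-\tfrac{1}{I_{\chi^{2}}(X,Z)}\Bigr)^{n} \;=\; 1,
\end{equation*}
since $(1-1/a)^{n}\to 1$ as $a\to\infty$.

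Second I would upgrade this expected bound to an almost-sure one. The key observation is that $\Delta^{*}(t)-\Delta(f_{n},t)$ is pointwise in $[0,1]$: both $\Delta^{*}(t)$ and $\Delta(f_{n},t)$ are absolute differences of probabilities and thus lie in $[0,1]$; moreover, for any realization of the sample, the test $f_{n}(\cdot,\mathcal{D}_{n})\colon\mathcal{Z}\to\{0,1\}$ is a particular element of the class over which $\Delta^{*}(t)$ is a supremum, so $\Delta(f_{n},t)\le \Delta^{*}(t)$ and the absolute value drops out. Having a $[0,1]$-valued random variable with expected value at least $1$ forces it to equal $1$ almost surely under $\mu_{0}^{n}$.

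Third, since the supremum over $\mu$ in the corollary's statement includes $\mu_{0}$ and in particular any sample realization under $\mu_{0}^{n}$, the almost-sure equality yields
\begin{equation*}
\inf_{f_{n}\in\mathcal{F}_{n}} \sup_{\mu}\, \bigl[\Delta^{*}(t)-\Delta(f_{n},t)\bigr] \;\ge\; 1,
\end{equation*}
as claimed. There is no real obstacle in this argument; the only mild subtlety is the passage from the expectation-with-absolute-value form in Theorem \ref{th: 1} to the bare form in the corollary, and that is handled entirely by the boundedness of $\Delta^{*}(t)-\Delta(f_{n},t)$ in $[0,1]$.
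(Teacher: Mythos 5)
Your step 1 is where the argument breaks. Theorem \ref{th: 1} is a minimax statement in which the supremum over $\mu$ is an essential part of the left-hand side; it is not a per-distribution bound of the form $E_{\mathcal{D}_{n}\sim\mu_{0}^{n}}|\Delta^{*}(t)-\Delta(f_{n},t)|\geq\bigl(1-1/I_{\chi^{2}}^{\mu_{0}}(X,Z)\bigr)^{n}$, so you cannot ``specialize it to this $\mu_{0}$.'' Indeed, such a per-distribution bound is false: take a marginal $\mu_{0}$ on $\mathcal{X}$ with $I_{\chi^{2}}(X,Z)=\infty$ (e.g.\ $t$ injective, $X$ supported on countably many points) and extend it so that $S$ is independent of $X$; then $\Delta^{*}(t)=0$ and the expected error is $0$, not $\geq 1$. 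The same failure occurs for joint laws where the marginal is heavily skewed toward a few atoms. The lower bound in Theorem \ref{th: 1} comes from an \emph{adversarial} construction of the joint law (near-uniform marginal over distinct values of $t$, sensitive attribute given by a binary expansion), and the hypothesis of the corollary --- that $I_{\chi^{2}}(X,Z)=\infty$ for \emph{some} distribution --- does not by itself make any particular extension of that distribution hard. Moreover, the proof of Theorem \ref{th: 1} uses a uniform distribution over the $K$ values of $t$, which does not exist when $K=\infty$, so the theorem cannot simply be pushed to the limit $I_{\chi^{2}}=\infty$ by continuity as your ``$(1-1/a)^{n}\to 1$'' step suggests.

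The paper closes this gap differently: it first shows (Lemma \ref{lem: cor2}) that the existence of a distribution with $I_{\chi^{2}}(X,Z)=\infty$ forces $t$ to take infinitely many distinct values $\{a_{k}\}$, and then, for each $\epsilon>0$, it constructs a \emph{new} family of joint distributions supported on finitely many of these points (probabilities $p_{k}=1/K$ with $K>1/(1-(1-\epsilon)^{1/n})$, sensitive attribute via binary expansion), rerunning the Theorem \ref{th: 1} argument to get $\Delta^{*}(t)-\Delta(f_{n},t)\geq 1-\epsilon$ for that distribution, and finally lets $\epsilon\to 0$ inside the supremum over $\mu$. If you want to salvage your approach, you should replace step 1 by exactly this kind of construction: use the infinite image of $t$ to build worst-case distributions with arbitrarily large but finite $I_{\chi^{2}}$, apply the Theorem \ref{th: 1}-style bound to each, and take the limit through the supremum over $\mu$; your steps 2 and 3 (boundedness in $[0,1]$ and dropping the absolute value since $\Delta(f_{n},t)\leq\Delta^{*}(t)$) are fine, but they cannot repair the missing per-distribution lower bound.
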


 \textbf{Examples: }The results in corollary \ref{cor: rates} and \ref{cor: 1} imply that empirical certificates of representation distributions induced by many common encoders do not have meaningful approximation rates:
 \begin{itemize}
    \item Suppose that $t$ is injective from $\mathbb{R}^{D}$ to $\mathbb{R}^{d}$. Then, there exists a distribution over $\mathcal{X}\times \{0, 1\}$ such that $I_{\chi^{2}}(X, Z)=\infty$ and thus, $\Delta^{*}(t)=1$, but $\Delta(f_{n}, t)=0$ for all auditing functions $f_{n}$.
    \item Suppose that $|\{t(x)| x\in\mathcal{X}\}| \geq n/(\ln(n))^{\alpha}$, for some $\alpha < 1$. Then, the approximation rate of $\Delta(f_{n}, t)$ for all auditing functions $f_{n}$ is $\omega(n^{-s})$ for any $s>0$.
\end{itemize}



\subsection{Sufficient Condition}
This section shows that a finite $\chi^{2}$ mutual information between feature and representation for all distributions over $\mathcal{X}$ is a sufficient condition for empirical demographic parity certificates to converge at a $O(n^{-1/2})$ rate.

\begin{thm}
\label{th: 2}
Let $n\geq 1$. Consider a representation mapping $t:\mathcal{X}\rightarrow\mathcal{Z}$. Then, if $\mathcal{F}_{n}$ denotes the set of all auditors $f_{n}: \mathcal{Z} \times (\mathcal{Z} \times \{0, 1\})^{n}\rightarrow \{0, 1\}$, if $n_{s}=|\{ i| s_{i}=s\}|$, 
\begin{equation}
    \label{eq: chi_sufficient}
    \begin{split}
    \inf\limits_{f_{n}\in \mathcal{F}_{n}}\sup\limits_{\mu} E_{\mathcal{D}_{n}}|\Delta^{*}(t) - \Delta(f_{n}, t)| & \\
    - 2\sum_{s=0, 1}n_{s}^{-1/2} \sqrt{I_{\chi^{2}}(X, Z|S=s)}\leq 0 & \\
    \end{split}
\end{equation}
\end{thm}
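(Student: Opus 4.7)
The plan is to exhibit an explicit auditor $f_n \in \mathcal{F}_n$ that achieves the stated upper bound, and then compute the approximation error via a Pinsker-style inequality between total variation and $\chi^2$-divergence.

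First, I would observe that because $f$ ranges over all Boolean functions, $\Delta^*(t) = \sup_f \Delta(f,t) = \|\mu_t^1 - \mu_t^0\|_{TV}$, with the supremum attained by the Neyman–Pearson type rule $f^*(z) = \mathbf{1}\{p_t^1(z) > p_t^0(z)\}$, where $p_t^s$ denotes the density of $\mu_t^s$. For the auditor $f_n$, I would use the plug-in of the Monte-Carlo (mixture) estimator of each class-conditional representation distribution: for $s \in \{0,1\}$, set
\begin{equation}
\hat\mu_t^s = \frac{1}{n_s}\sum_{i:\,s_i=s} P(Z\mid X=x_i),
\qquad f_n(z) = \mathbf{1}\{\hat p_t^1(z) > \hat p_t^0(z)\}.
\end{equation}
This is unbiased ($E[\hat\mu_t^s]=\mu_t^s$), is measurable with respect to $\mathcal{D}_n$, and reduces to the empirical distribution when $t$ is deterministic.

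Next, I would prove the deterministic inequality
\begin{equation}
\Delta^*(t) - \Delta(f_n,t) \;\leq\; 2\bigl(\|\mu_t^0 - \hat\mu_t^0\|_{TV} + \|\mu_t^1 - \hat\mu_t^1\|_{TV}\bigr).
\end{equation}
The argument is standard: since $f_n$ is the optimal Boolean test for the estimated pair, $\int f_n\,d(\hat\mu_t^1 - \hat\mu_t^0) \ge \int f^*\,d(\hat\mu_t^1 - \hat\mu_t^0)$; then replacing $\hat\mu_t^s$ by $\mu_t^s$ inside each integrand costs at most $\|\mu_t^s - \hat\mu_t^s\|_{TV}$ because $|\int g\,d(\mu-\nu)|\le \|\mu-\nu\|_{TV}$ for any $g:\mathcal{Z}\to[0,1]$; telescoping gives the factor of 2.

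Third, I would pass to expectations and invoke the Cauchy–Schwarz/Pinsker-type bound $\|P-Q\|_{TV}\le \tfrac12\sqrt{\chi^2(P\|Q)}$ followed by Jensen's inequality:
\begin{equation}
E\|\mu_t^s-\hat\mu_t^s\|_{TV} \;\leq\; \tfrac12\sqrt{E\bigl[\chi^2(\hat\mu_t^s\|\mu_t^s)\bigr]}.
\end{equation}
The key computation is then the expected $\chi^2$, which I would evaluate by exchanging sum and integral and using the i.i.d.\ structure of the mixture estimator. Writing $q_x(z)=p(z\mid X=x)$, a direct variance calculation yields
\begin{equation}
E\!\left[\chi^2(\hat\mu_t^s\|\mu_t^s)\right] = \frac{1}{n_s}\int \frac{E_{x\mid S=s}[q_x(z)^2] - p_t^s(z)^2}{p_t^s(z)}\,dz = \frac{1}{n_s}\,I_{\chi^2}(X;Z\mid S=s),
\end{equation}
by the definition of conditional $\chi^2$-mutual information. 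Combining the three displays and absorbing the constant into the factor of 2 stated in the theorem gives the claim.

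The main obstacle I anticipate is step two: one must carefully justify why the plug-in rule based on the \emph{random} densities $\hat p_t^s$ is a valid member of $\mathcal{F}_n$ (measurability in $\mathcal{D}_n$) and why the TV-control of plug-in auditors is tight enough, especially when $t$ is deterministic so that $\hat\mu_t^s$ is atomic while $\mu_t^s$ need not be --- here the finiteness of $I_{\chi^2}(X;Z\mid S=s)$ is exactly what forces $\mu_t^s$ to be dominated by $\hat\mu_t^s$ in the right sense so that $\chi^2(\hat\mu_t^s\|\mu_t^s)$ remains meaningful. Once that is settled, the variance computation of step five is routine.
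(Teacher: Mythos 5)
Your proof is correct and follows essentially the same route as the paper: both build the plug-in auditor from the mixture estimator $\hat\mu_t^s = n_s^{-1}\sum_{i\colon s_i=s} P(Z\mid X=x_i)$, reduce $\Delta^*(t)-\Delta(f_n,t)$ deterministically to total-variation errors $\sum_s \mathrm{TV}(\mu_t^s,\hat\mu_t^s)$, control each by $\sqrt{\chi^2}$ via Cauchy--Schwarz, and use Jensen plus the i.i.d.\ structure to evaluate $E[\chi^2(\hat\mu_t^s\|\mu_t^s)]=I_{\chi^2}(X,Z\mid S=s)/n_s$. The only cosmetic difference is in the deterministic step, where you swap true and estimated measures directly inside $\int f\,d(\mu^1-\mu^0)$ while the paper routes through the balanced error rate via Lemma \ref{lem: ber}; your version in fact yields a constant tighter by a factor of two, which, as you note, is absorbed into the stated factor of $2$.
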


A finite $\chi^{2}$ mutual information between $X$ and $Z$ implies that $P(Z)$ and $P(Z|X)$ are close in the sense of  the $\chi^{2}$ divergence and thus by sampling representations from $P(Z|X)$, we have a non-zero probability to sample all the atoms that can form the representation distribution $\mu_{t}$ and thus to detect all the dependence between representations and sensitive attributes.

\subsection{\texorpdfstring{$\chi^{2}$}{e} versus Classic Mutual Information}
Our results in Theorems \ref{th: 1} and \ref{th: 2} highlight the connection between the $\chi^{2}$ mutual information and the approximation rate of empirical certificates. A similar result cannot be obtained with the classic mutual information $I_{Sh}(X, Z)$ that is based on Shannon entropy. 

To demonstrate this point, we construct the following distribution $\mu$ over $\mathcal{X}\times\{0, 1\}$. Features are uniformly distributed over $[0, 1]$ and $t(x)=i$ for $x\in [1/i, 1/(i+1))$ and $i>0$. For each $i>0$, the sensitive attribute is constant over $[1/i, 1/(i+1))$ and equal to $1$ with probability $1/2$. We show in the appendix that $I_{Sh}(X, Z) < \ln(6)/3 + 3$, but $I_{\chi^{2}}(X, Z)=\infty$. Since the sensitive attribute $S$ is a deterministic function of the representation $Z=t(X)$, $\Delta^{*}(t)=1$. But, for a finite sample of size $n$, $E_{\mathcal{D}_{n}}\Delta(f_{n}, t)$ is zero for all auditors $f_{n}$, despite $I_{Sh}(X, Z) <\infty$. 


\section{Smooth and Fair Representations}
The previous section suggests restricting the fair representation problem \eqref{eq: fair_rep_prob} to representation mappings for which the $\chi^{2}$mutual information between feature and representation is finite for all distributions over $\mathcal{X}$. 
In this section, this sufficient condition is met by introducing an additive Gaussian white noise (AWGN) channel after the encoder $t$.

\subsection{Convergence of Smoothed Empirical Certificate}
For any representation mapping $t:\mathcal{X}\rightarrow\mathcal{Z}$, we denote $t_{\sigma}$ the convolution of $t$ with a Gaussian noise $\mathcal{N}(0, \sigma^{2}I_{d})$: $t_{\sigma}(X)= t(X) + noise$, with $noise\sim \mathcal{N}(0, \sigma^{2}I_{d})$. The convolved representation $Z_{\sigma}=Z + noise$ has a distribution denoted $\mu_{t*\sigma}$. The convolution 
smoothes the representation distribution by making $P(Z_{\sigma}|X)$ a Gaussian whose support covers the support of the representation distribution $P(Z_{\sigma})$ and thus, guarantees that samples from different conditional distributions $P(Z_{\sigma}|X=x)$ are not too far away. 
\begin{thm}
\label{th: 3}
Let $\sigma > 0$ and $n\geq 1$. For all representation mapping $t:\mathcal{X}\rightarrow \mathcal{Z}$ and for any distribution over $\mathcal{X}$, if $||t||_{\infty}\triangleq\sup_{x\in\mathcal{X}}||t(x)||_{2}$ , then for $s\in \{0, 1\}$
\begin{equation}
    I_{\chi^{2}}(X, Z|S=s) \leq \exp\left(\frac{||t||_{\infty}^{2}}{\sigma^{2}}\right) <\infty.
\end{equation}
Therefore,  
\begin{equation}
\begin{split}
   \inf\limits_{f_{n}\in \mathcal{F}_{n}} \; \sup\limits_{\mu}E_{\mathcal{D}_{n}}[\Delta^{*}(t_{ \sigma}) - \Delta(t_{\sigma}, f_{n})] & \\ \leq  2\exp\left(\frac{||t||_{\infty}^{2}}{2\sigma^{2}}\right)(n_{0}^{-1/2} + n_{1}^{-1/2}).& \\
   \end{split}
\end{equation}
\end{thm}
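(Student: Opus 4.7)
The proof splits naturally into two parts: a distribution-free bound on the conditional $\chi^{2}$ mutual information induced by the AWGN channel, and then an appeal to Theorem \ref{th: 2}.

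For the main part, I would proceed as follows. Since the Gaussian noise is independent of $(X,S)$, the smoothed conditional $P(Z_{\sigma} \mid X = x, S = s)$ is the Gaussian with density $\phi_{x}(z) = \mathcal{N}(z;\, t(x), \sigma^{2} I_{d})$, and the class-conditional representation marginal $q_{s}(z) = P(Z_{\sigma} \mid S = s)$ is the mixture $q_{s}(z) = \int \phi_{x'}(z)\, dP(x' \mid S = s)$. Rewriting the definition in the form
\begin{equation}
I_{\chi^{2}}(X, Z \mid S = s) = E_{x \sim P(X \mid S = s)}\bigl[\chi^{2}(\phi_{x} \,\|\, q_{s})\bigr],
\end{equation}
I would exploit two structural facts. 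First, $q \mapsto \chi^{2}(\phi_{x} \| q) = \int \phi_{x}^{2}/q - 1$ is convex because $q \mapsto 1/q$ is convex, so Jensen's inequality applied to the mixture gives $\chi^{2}(\phi_{x} \| q_{s}) \leq E_{x' \sim P(X \mid S=s)}\, \chi^{2}(\phi_{x} \| \phi_{x'})$. Second, a direct Gaussian integral shows that for two densities with identical covariance,
\begin{equation}
\chi^{2}\bigl(\mathcal{N}(\mu_{1}, \sigma^{2} I_{d}) \,\bigl\|\, \mathcal{N}(\mu_{2}, \sigma^{2} I_{d})\bigr) + 1 = \exp\!\left(\tfrac{\|\mu_{1} - \mu_{2}\|_{2}^{2}}{\sigma^{2}}\right).
\end{equation}
Combining these and bounding $\|t(x) - t(x')\|_{2}$ in terms of $\|t\|_{\infty}$ yields the exponential bound on $I_{\chi^{2}}(X, Z \mid S = s)$. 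The resulting bound is uniform in the feature distribution because the Gaussian formula depends only on $t(x), t(x')$ and the encoder is the only object that appears.

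The second part is a routine substitution: Theorem \ref{th: 2} gives
\begin{equation}
\inf_{f_{n} \in \mathcal{F}_{n}} \sup_{\mu} E_{\mathcal{D}_{n}}|\Delta^{*}(t_{\sigma}) - \Delta(f_{n}, t_{\sigma})| \leq 2\sum_{s=0,1} n_{s}^{-1/2}\sqrt{I_{\chi^{2}}(X, Z_{\sigma} \mid S = s)},
\end{equation}
and plugging the exponential bound together with $\sqrt{\exp(u)} = \exp(u/2)$ produces the claimed rate. I should also verify that Theorem \ref{th: 2} applies in the smoothed setting, which only requires that $Z_{\sigma}$ be a measurable function of $X$ and an auxiliary noise variable independent of $(X,S)$, so the arguments of the earlier theorem still go through after reinterpreting the representation mapping as $t_{\sigma}$.

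The main obstacle I anticipate is the precise constant in the exponent. The pairwise bound $\|t(x) - t(x')\|_{2} \leq 2\|t\|_{\infty}$ used above only yields $I_{\chi^{2}}(X, Z \mid S = s) \leq \exp(4\|t\|_{\infty}^{2}/\sigma^{2})$, which is weaker than the stated $\exp(\|t\|_{\infty}^{2}/\sigma^{2})$. Recovering the tighter constant seems to require a more direct evaluation of $\int \phi_{x}(z)^{2}/q_{s}(z)\, dz$ that exploits the Gaussian mixture structure of $q_{s}$ — for example, by lower bounding $q_{s}$ via Jensen applied to the convex function $e^{-u/(2\sigma^{2})}$ around the class-conditional mean $E[t(X) \mid S = s]$, so that the resulting Gaussian integral telescopes into $\|t(x) - E[t(X)\mid S=s]\|_{2}^{2}/\sigma^{2}$ rather than the pairwise diameter. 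Ironing out this constant, or accepting that $\|t\|_{\infty}$ should be read as a diameter-style quantity absorbing the factor of four, is the only quantitative subtlety; the qualitative content — finiteness of $I_{\chi^{2}}$ and an $O(n^{-1/2})$ rate — follows from the skeleton above.
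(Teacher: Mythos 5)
Your proof follows essentially the same route as the paper's: conditional on $S=s$, bound $\chi^{2}\bigl(P(Z_{\sigma}\mid X=x)\,\big\|\,P(Z_{\sigma}\mid S=s)\bigr)$ by convexity/Jensen against the Gaussian mixture components, evaluate the pairwise Gaussian $\chi^{2}$-divergence in closed form, control it via $\|t\|_{\infty}$, and then substitute into Theorem \ref{th: 2} with $\sqrt{\exp(u)}=\exp(u/2)$. The constant you flag is not a gap on your side: the paper's own proof arrives at $\tfrac{1}{2}\exp\bigl(2\|t\|_{\infty}^{2}/\sigma^{2}\bigr)$ (using a pairwise-distance bound and a Gaussian $\chi^{2}$ formula whose prefactor is off), which likewise does not match the $\exp\bigl(\|t\|_{\infty}^{2}/\sigma^{2}\bigr)$ stated in the theorem, so your carefully derived $\exp\bigl(4\|t\|_{\infty}^{2}/\sigma^{2}\bigr)$ is, if anything, the more rigorous version of the same argument, affecting only the constant in the exponent and not the $O(n^{-1/2})$ conclusion.
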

The upper bound in Theorem \ref{th: 3} does not depend on the dimensions $d$ of the representation space $\mathcal{Z}$, but only on $n^{-1/2}$ and on the ratio $||t||_{\infty}/\sigma$ that can be interpreted as a signal-to-noise ratio in the AWGN channel. Larger values of $||t||_{\infty}$ increase the variance of $Z$ and thus require larger noise $\sigma$ to keep the conditional distribution $P(Z_{\sigma}|X)$ close to the distribution $P(Z_{\sigma})$. The bound is only meaningful if $||t||_{\infty} <\infty$, which holds, for example, if the features space is bounded and $t$ is a continuous mapping. 

Both Theorems \ref{th: 2} and \ref{th: 3} rely on  a plug-in auditor that first estimates the class-conditional densities $\mu_{t*\sigma}^{0}$ and $\mu_{t*\sigma}^{1}$. From a sample $\mathcal{D}_{n}=\{(x_{i}, s_{i})\}_{i=1}^{n}$, we construct an empirical estimate of $\mu_{t* \sigma}$ over $\mathcal{Z}\times \{0, 1\}$ as  
\begin{equation}
\label{eq: mu_n2}
    \mu_{n, \sigma}(z, s) = \displaystyle\sum_{i=1, s_{i}=s}^{n}P(z| X=x_{i})
\end{equation}
with $P(.| X=x_{i})\sim \mathcal{N}(t_{n}(x_{i}), \sigma I_{d})$. Our plug-in auditor $f_{n}^{plug}$ compares $\mu_{n, \sigma}(z, 0)$ to $\mu_{n, \sigma}(z, 1)$:
\begin{equation}
f_{n}^{plug}(z) =
\begin{cases}
0 & \text{if } \mu_{n, \sigma}(z, 0) \geq \mu_{n, \sigma}(z, 1) \\
1 & \text{otherwise.} 
\end{cases}
\end{equation}
Since we obtain the upper bounds in Theorems \ref{th: 2} and \ref{th: 3} with the plug-in auditor $f_{n}^{plug}$, we can guarantee that the representation demographic parity is within $O(n^{-1/2})$ of the empirical certificate signed by the plug-in auditor.

\subsection{Learning Fair Representation}
\begin{figure*}
\vskip 0.2in
 \centering
     \begin{subfigure}{0.24\textwidth}
    \includegraphics[width=\linewidth]{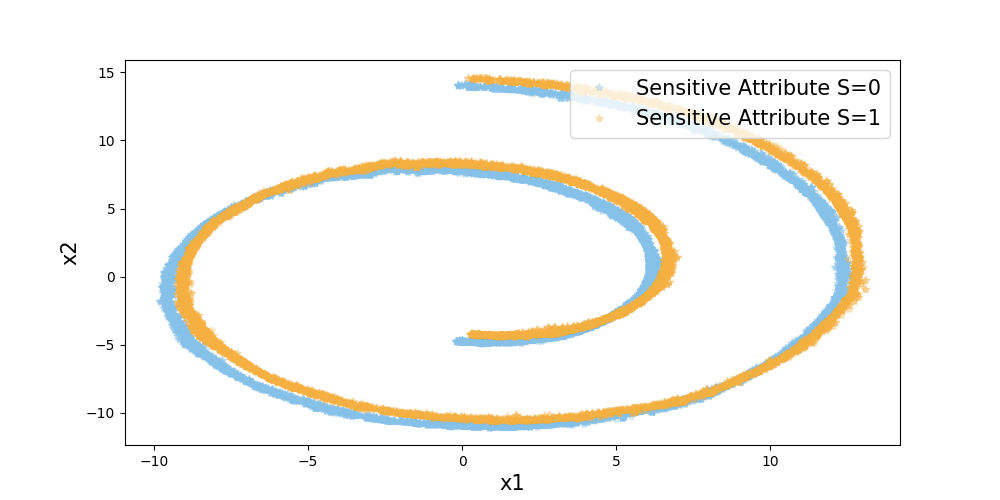}
    \caption{No fairness constraint}
    \label{fig:sw0}
    \end{subfigure}
     \begin{subfigure}{0.24\textwidth}
    \includegraphics[width=\linewidth]{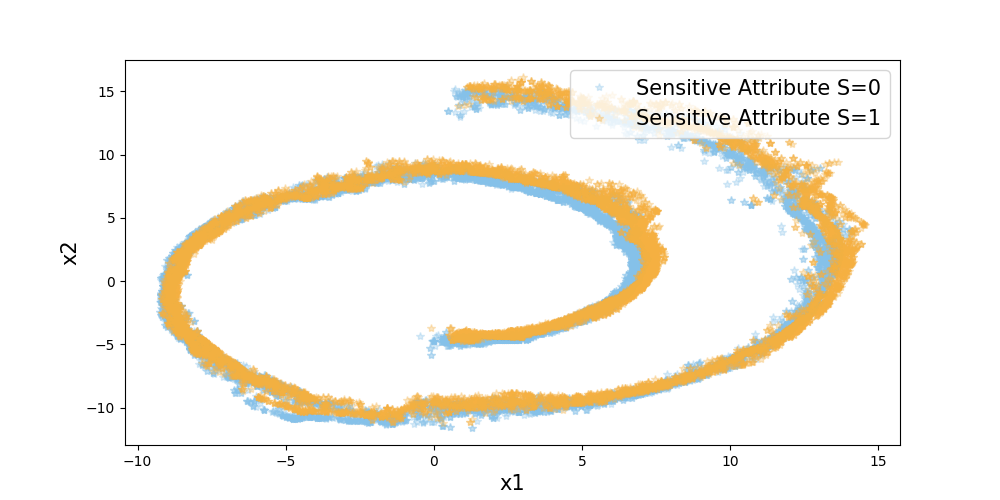}
   \caption{AdvCE (\cite{edwards2015censoring})}
    \label{fig:sw1}
    \end{subfigure}
     \begin{subfigure}{0.24\textwidth}
    \includegraphics[width=\linewidth]{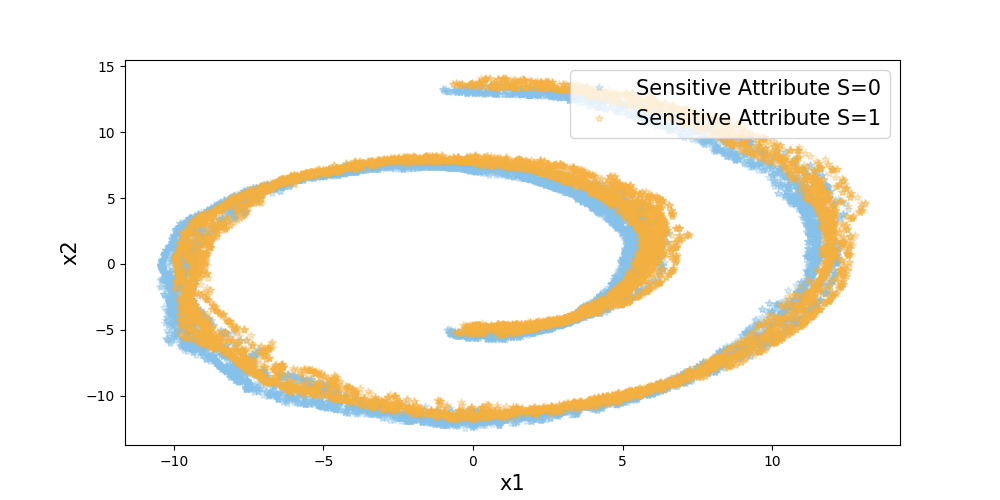}
   \caption{AdvL1 (\cite{madras2018learning})}
    \label{fig:sw2}
    \end{subfigure}
     \begin{subfigure}{0.24\textwidth}
    \includegraphics[width=\linewidth]{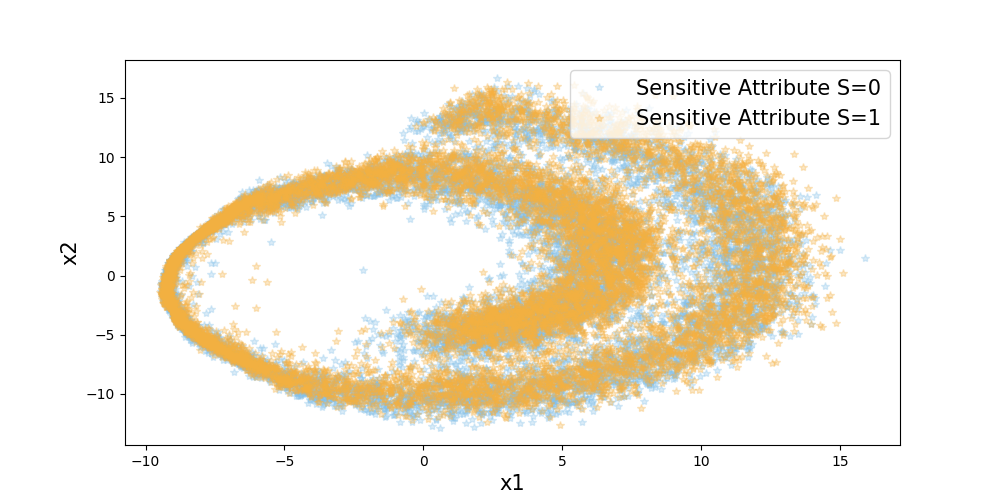}
    \caption{AWGN (our work)}
    \label{fig:sw3}
    \end{subfigure}
    \caption{\textbf{Without additional guarantees, fair representation learning could still leak information related to sensitive attribute.} This visualizes in a 2D-plane representations from the Swiss Roll data set once passed through a decoder. The easier it is to visually distinguish sensitive attributes, the more information related to sensitive attribute the representation leaks.}
    \label{fig: sw}
    \vskip -0.2in
\end{figure*}

In practice, the representation mapping $t$ and the decoder $g$ are modelled by neural networks. An AWGN channel is added to $t$ to learn a smoothed representation distribution $\mu_{t*\sigma}$. The data controller trades off minimizing a reconstruction loss $\mathcal{L}_{rec}(t, g)=E_{x}[l_{rec}(t, g, x)$ with minimizing demographic unparity $\mathcal{L}_{DP}(t)= \Delta^{*}(t_{\sigma})$. With a sample $\mathcal{D}_{n}=\{(x_{i}, s_{i})\}_{i=1}^{n}$, the data controller uses the plug-in auditor and solves the empirical minimization problem as
\begin{equation}
\label{eq: obj_sample}
    \min_{t,g}\frac{1}{n}\displaystyle\sum l_{rec}(t, g, x_{i}) + \lambda \Delta(f_{n}^{plug}, t_{\sigma}),
\end{equation}
where $\lambda$ controls for the strength of the fairness constraint imposed on the representation distribution. The minimization problem in \eqref{eq: obj_sample} differs from previous work on fair representation learning because of the noise added to $Z$.  The main advantage of convolving the representation distribution with a Gaussian noise is that the finite-sample fairness constraint $\Delta(f_{n}^{plug}, t_{\sigma}, )$ approximates $\Delta^{*}(t_{ \sigma})$ at a rate $O(n^{-1/2})$, while previous work does not offer this guarantee (see section \ref{sec: 2}).

The second advantage is that the empirical demographic parity certificate can be computed without modelling the auditor by an additional neural network. This is because we can use our empirical estimates \eqref{eq: mu_n} of the class-conditional densities to estimate the posterior distribution $\eta(z, s)=P(S=s|Z=z)$ as $\eta_{n}(z, s) = \mu_{n, \sigma}(z| S=s)/\mu_{n, \sigma}(z)$, where $\mu_{n, \sigma}(z) = \mu_{n, \sigma}(z, 1) + \mu_{n, \sigma}(z, 0)$. Since $\Delta^{*}(t)$ relates to the balanced error rate of predicting the sensitive attributes (see proof of \ref{th: 2}
 or \cite{feldman2015certifying}), we can write $\Delta^{*}(t)=\mathcal{L}_{DP}(\mu_{t, \sigma})$, where $\mathcal{L}_{DP}(\mu_{t, \sigma})=E_{z\sim \mu_{t, \sigma}}[|\eta(z, 1) -\eta(z, 0)|]$ (see \cite{zhao2013beyond}). Our approach relies on two results: (i) for any finite sample of size $n$, $\mathcal{L}_{DP}(\mu_{n, \sigma})$ approximates well $\mathcal{L}_{DP}(\mu_{t*\sigma})$; (ii) $\mathcal{L}_{DP}(\mu_{n, \sigma})$ can be estimated efficiently by Monte-Carlo estimation. The first observation uses the following result, which is a consequence of Theorem \ref{th: 3}
 
 \begin{thm}
 \label{th: 4}
 Let $\sigma > 0$ and $n\geq 1$. For all representation mapping $t:\mathcal{X}\rightarrow \mathcal{Z}$
 \begin{equation}
 \begin{split}
  \sup_{\mu}E_{\mathcal{D}_{n}}|\mathcal{L}_{DP}(\mu_{t* \sigma}) - \mathcal{L}_{DP}(\mu_{n, \sigma})| & \\ \leq 2\exp\left(\frac{||t||_{\infty}^{2}}{2\sigma^{2}}\right) (n_{0}^{-1/2} + n_{1}^{-1/2}). & \\
  \end{split}
 \end{equation}
 \end{thm}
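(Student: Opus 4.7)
The plan is to reduce the bound to a convergence rate for the smoothed empirical class-conditional densities and then invoke the Gaussian-smoothing estimate established in the proof of Theorem~\ref{th: 3}. First, I use the identity $\mathcal{L}_{DP}(\mu)=\mathrm{TV}(\mu^{0},\mu^{1})$ implicit in the paragraph preceding the statement (the relation $\Delta^{*}(t)=\mathcal{L}_{DP}(\mu_{t*\sigma})$ together with the variational characterization of total variation). This rewrites the quantity of interest as $|\mathrm{TV}(\mu_{t*\sigma}^{0},\mu_{t*\sigma}^{1}) - \mathrm{TV}(\mu_{n,\sigma}^{0},\mu_{n,\sigma}^{1})|$, and because TV is a metric, two applications of the triangle inequality give
\begin{equation}
|\mathcal{L}_{DP}(\mu_{t*\sigma}) - \mathcal{L}_{DP}(\mu_{n,\sigma})|
\;\leq\; \mathrm{TV}(\mu_{t*\sigma}^{0},\mu_{n,\sigma}^{0}) + \mathrm{TV}(\mu_{t*\sigma}^{1},\mu_{n,\sigma}^{1}).
\end{equation}
This reduces the problem to controlling the expected TV deviation of each empirical class-conditional smoothed density from its population counterpart.

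Second, I fix $s\in\{0,1\}$ and note that $\mu_{n,\sigma}^{s}(z)=n_{s}^{-1}\sum_{i:s_{i}=s}\phi_{\sigma}(z-t(x_{i}))$ is an unbiased Monte Carlo estimator of $\mu_{t*\sigma}^{s}(z)=E_{X\mid S=s}[\phi_{\sigma}(z-t(X))]$, where $\phi_{\sigma}$ denotes the density of $\mathcal{N}(0,\sigma^{2}I_{d})$. Applying Cauchy--Schwarz in the form $\mathrm{TV}(P,Q)\leq \tfrac{1}{2}\sqrt{\chi^{2}(P,Q)}$ followed by Jensen's inequality yields
\begin{equation}
E[\mathrm{TV}(\mu_{t*\sigma}^{s},\mu_{n,\sigma}^{s})] \;\leq\; \tfrac{1}{2}\sqrt{E[\chi^{2}(\mu_{n,\sigma}^{s},\mu_{t*\sigma}^{s})]}.
\end{equation}
A direct variance calculation, using independence of the $x_{i}$'s with $s_{i}=s$ and Fubini to exchange the $z$-integral with the expectation over the sample, then gives
\begin{equation}
E[\chi^{2}(\mu_{n,\sigma}^{s},\mu_{t*\sigma}^{s})] \;=\; \int\frac{\mathrm{Var}_{X\mid S=s}[\phi_{\sigma}(z-t(X))]}{n_{s}\,\mu_{t*\sigma}^{s}(z)}\,dz \;=\; \frac{1}{n_{s}}\,I_{\chi^{2}}(X,Z_{\sigma}\mid S=s),
\end{equation}
where the last equality is precisely the definition of the conditional $\chi^{2}$-mutual information.

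Third, I invoke the Gaussian-smoothing bound from Theorem~\ref{th: 3}, namely $I_{\chi^{2}}(X,Z_{\sigma}\mid S=s)\leq \exp(\|t\|_{\infty}^{2}/\sigma^{2})$, which is distribution-free. Plugging this into the estimate of Step~2 gives $E[\mathrm{TV}(\mu_{t*\sigma}^{s},\mu_{n,\sigma}^{s})]\leq \tfrac{1}{2}\exp(\|t\|_{\infty}^{2}/(2\sigma^{2}))\,n_{s}^{-1/2}$, and summing over $s\in\{0,1\}$ and taking the supremum over $\mu$ yields the claimed inequality (with room to spare in the leading constant compared to the stated $2$, which matches the uniform presentation of Theorem~\ref{th: 3}).

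The main obstacle is the variance identity in Step~2: recognising that the expected $\chi^{2}$-distance between the smoothed empirical density and the smoothed population density collapses \emph{exactly} to $n_{s}^{-1}\,I_{\chi^{2}}(X,Z_{\sigma}\mid S=s)$. The cancellation requires carefully expanding $\mathrm{Var}(\mu_{n,\sigma}^{s}(z)) = n_{s}^{-1}\bigl(E_{X\mid S=s}[\phi_{\sigma}(z-t(X))^{2}] - \mu_{t*\sigma}^{s}(z)^{2}\bigr)$, dividing by $\mu_{t*\sigma}^{s}(z)$, integrating in $z$, and exchanging the order of integration to match the definition of $I_{\chi^{2}}$; this mirrors the Monte Carlo convergence argument of Goldfeld et al.\ for smoothed mutual information and is the essential quantitative content that ties the empirical certificate to the noise level $\sigma$.
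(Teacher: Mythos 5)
Your proof is correct and follows essentially the same route as the paper's: rewrite each $\mathcal{L}_{DP}$ as a (scaled) total-variation distance between class-conditionals, split by the triangle inequality, bound each $E[\mathrm{TV}(\mu_{t*\sigma}^{s},\mu_{n,\sigma}^{s})]$ via the Monte Carlo/$\chi^{2}$ argument (which is exactly the content of the paper's Lemma~\ref{lem: appendix 2}), and finish with the $I_{\chi^{2}}\leq\exp(\|t\|_{\infty}^{2}/\sigma^{2})$ bound from Theorem~\ref{th: 3}. The only differences are cosmetic: you re-derive the appendix lemma inline via the $\mathrm{TV}\leq\tfrac{1}{2}\sqrt{\chi^{2}}$ inequality plus Jensen, and you track the normalization of TV more carefully, which is why your constant comes out smaller than the stated $2$.
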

 Therefore, we can use $\mathcal{L}_{DP}(\mu_{n, \sigma})$ as an approximation of $\mathcal{L}_{DP}(\mu_{t*\sigma})$. That is,  in place of $\mu_{t, \sigma}$, we propose to use the distribution $\mu_{n, \sigma}$, for which $\eta_{n}$ is the posteriori probability. Moreover,  $\mathcal{L}_{DP}(\mu_{n, \sigma})$ can be efficiently approximated by Monte Carlo integration. For a sample of features $\mathcal{D}_{n}=\{(x_{i}, y_{i})\}_{i=1}^{n}$, $\mu_{n, \sigma}^{0}$ and $\mu_{n, \sigma}^{1}$ are mixtures of $d$-dimensional Gaussians. Thereby, we approximate $\mathcal{L}_{DP}(\mu_{n, \sigma})$ with 
 \begin{equation}
    \hat{ \mathcal{L}}_{DP}(\mu_{n, \sigma})=\frac{1}{nm}\displaystyle\sum_{i=1}^{n}\sum_{j=1}^{m}E_{\epsilon}[|\eta_{n}(z_{ij}, 1) -\eta_{n}(z_{ij}, 0)|,
 \end{equation}
 where $z_{ij}=t(x_{i})+noise_{ij}$, $\{noise_{ji}\}$ is a vector of $n\times m$ draws from a d-dimensional Gaussian $\mathcal{N}(0, \sigma I_{d})$ and $m$ is the number of draws per sample point. $\hat{ \mathcal{L}}_{DP}(\mu_{n, \sigma})$ is an unbiased approximation of $\mathcal{L}_{DP}(\mu_{t, \sigma})$ and achieves a Mean-Squared-Error (MSE) of order $O(n^{-1}m^{-1})$ (see proof of Theorem 4 in appendix).

To sum up, the data controller learns $(t, g)$ by minimizing the following combined empirical loss
 \begin{equation}
\label{eq: obj_sample_final}
    \min_{\theta, \varphi}\frac{1}{n}\displaystyle\sum_{i} l_{rec}(t, g, x_{i}) + \lambda \hat{ \mathcal{L}}_{DP}(\mu_{n, \sigma}).
\end{equation}
 
\textbf{Practical implementation.} We minimize the loss \eqref{eq: obj_sample_final} by stochastic gradient descent. Each mini-batch is split in half: the first half is used to estimate $\mu_{n, \sigma}$ as in \eqref{eq: mu_n}; the second half to estimate the loss in \eqref{eq: obj_sample_final} with $m=1$. At the end of training, we compute a leave-one-out balanced error rate $BER(f^{plug}_{n})$ for the plug-in auditor on both a test and train samples and infer an empirical certificate as $\Delta(f_{n}^{plug}, t)= 1 - 2BER(f^{plug}_{n})$ (see \cite{feldman2015certifying}). The Gaussian noise $\sigma$ is an hyper-parameter chosen so that empirical certificate estimated on train and test data are similar. 

\section{Experiments}

\begin{figure*}
\vskip 0.2in
 \centering
     \begin{subfigure}{0.45\textwidth}
    \includegraphics[width=\linewidth]{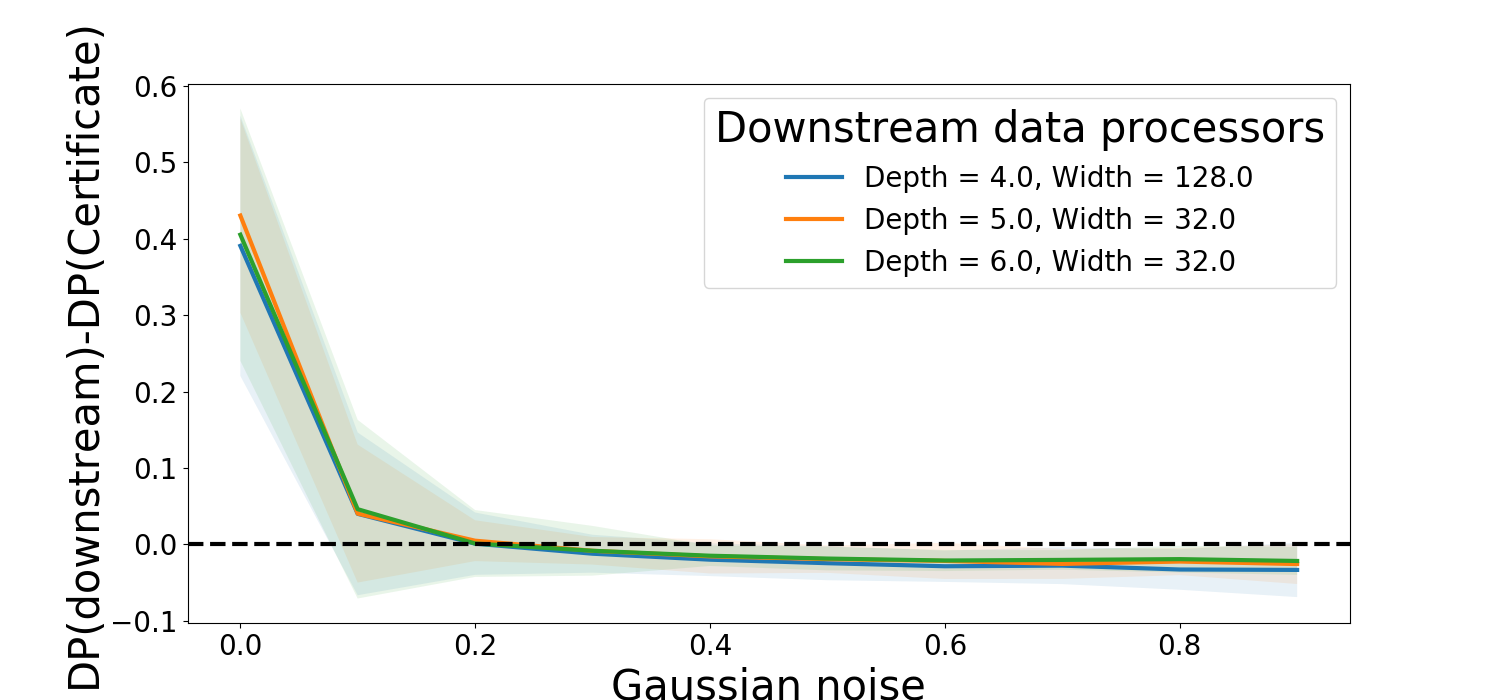}
    \caption{Swiss Roll dataset}
    \label{fig:robustness_error_sw}
    \end{subfigure}
     \begin{subfigure}{0.45\textwidth}
    \includegraphics[width=\linewidth]{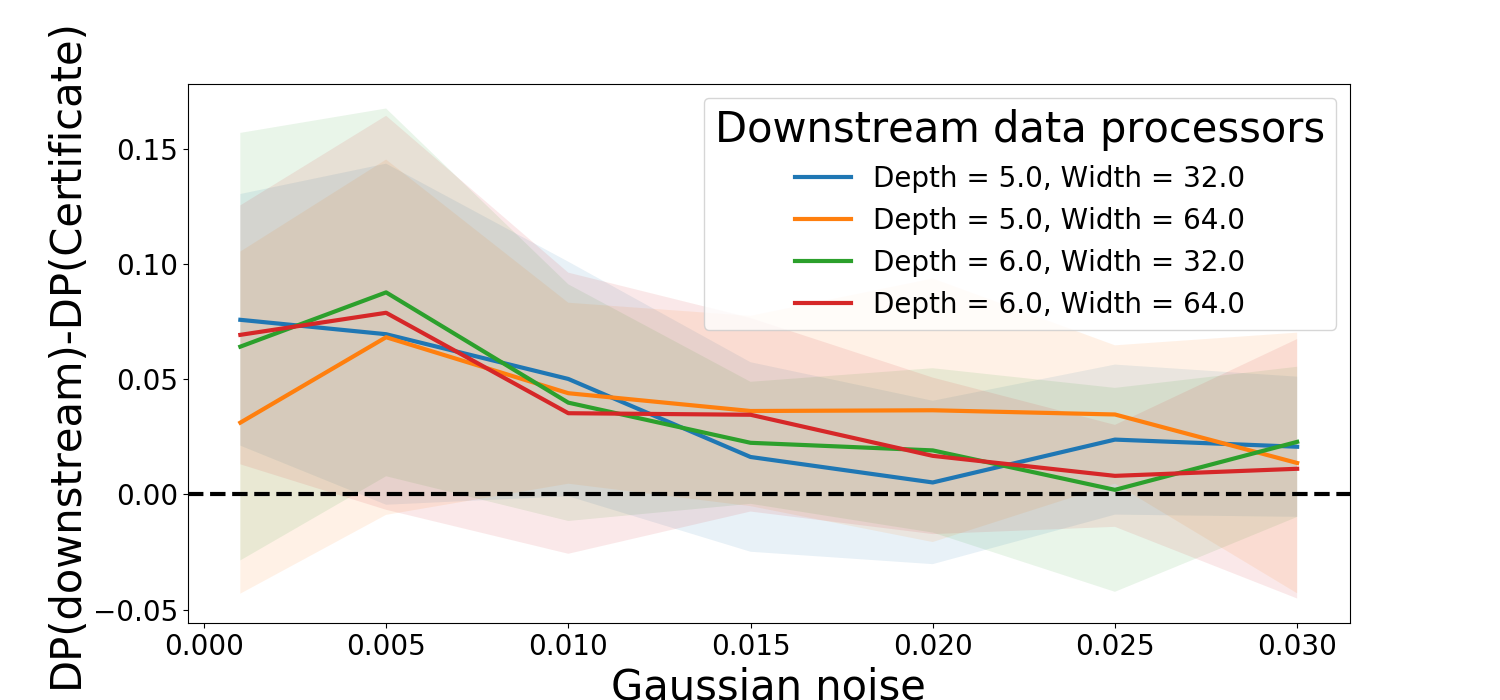}
    \caption{DSprites dataset}
    \label{fig:robustness_error_dsprites}
    \end{subfigure}
    \caption{
    \textbf{Generalization properties of empirical demographic parity certificates obtained by adding an additive Gaussian white noise (AWGN) channel to fair representation learning.} This shows the difference between the demographic parity measured by the empirical certificate and the one obtained by downstream processors that attempt to predict the sensitive attribute while observing fresh samples from the representation distribution. Differences below the horizontal dashed line indicate that the empirical certificate is a reliable approximation of the demographic parity of the representation distribution. Shaded areas captures the one standard deviation  around the median of $100$ simulations. }
    \label{fig: effect_noise}
    \vskip -0.2in
\end{figure*}

\subsection{Synthetic Datasets}
Our first synthetic data consists of two $3D$ Swiss rolls: one for $S=0$ and one shifted South-West for $S=1$ (see \ref{fig:sw0}). We use $20,000$ samples for training the autoencoder $(t, g)$ and $10,000$ fresh samples to train the downstream test functions. The autoencoder is a neural network with seven hidden layers and $32$ neurons each with RELU activation and is trained with a learning rate of $0.001$ for 400 epochs. The value of the fairness coefficient $\lambda$ in \eqref{eq: obj_sample_final} is $5$. The representation space has dimension $3$ ($d=3$).

Our second synthetic data is a variant of the DSprites dataset (\cite{dsprites17}) that contains $64$ by $64$ black and white images of various shapes (heart, square, circle). 
Since fair representation mapping consists of disentangling sensitive attributes from the rest of the features, DSprites offers an interesting challenge (\cite{Creager2019FlexiblyFR}). The DSprites dataset has six independent factors of variation: color (black or white); shape (square, heart, ellipse), scales (6 values), orientation (40 angles in $[0, 2\pi]$); x- and y- positions (32 values each). We adapt the sampling to generate a source of potential unfairness. We consider shape as the sensitive attribute. Following \cite{Creager2019FlexiblyFR}, we assign to each possible combination of attributes a weight proportional to $\frac{i_{shape}}{3} + \left(\frac{i_{X}}{32}\right)^{3}$ , where $i_{shape}\in \{0, 1, 2\}$ and $i_{X}=\{0, 1, ..., 21\}$. Then, we sample $60,000$ combinations of the six factors of variations according to the weights. We use $50,000$ samples to train the autoencoder and $10,000$ to train the downstream test functions. The autoencoder architecture -- borrowed from \cite{Creager2019FlexiblyFR} -- includes $4$ convolutional layers and $4$ deconvolutional layers and uses RELU activation. The model is trained with a learned rate of $0.0001$ for $400$ epochs. The value of the fairness coefficient $\lambda$ in \eqref{eq: obj_sample_final} is $0.2$. The dimension of the representation space is $10$.  

\textbf{Effect of noise on certificate reliability.} For both datasets, we first learn an encoder-decoder mapping $(t, g)$ with an increasing amount of Gaussian noise; estimate an empirical $\Delta(f_{n}^{plug}, t)-$demographic parity certificate; and then, test whether $\Delta(f_{n}^{plug}, t)$ is larger than the demographic parity $\Delta(f, t)$ of different downstream test functions $f$. Our test functions predict sensitive attributes from new samples of the representation distribution. We model them as fully connected neural networks with $4$ to $6$ hidden layers with $32$ to $128$ neurons each. Each test function is trained for $400$ epochs with a learning rate of $0.001$. After the autoencoder is trained, its weights are frozen, and fresh representations are generated by $10,000$ forward passes of the encoder on the test data. The generated fresh representations form the inputs of the test functions. 


Figure \ref{fig: effect_noise} shows that the AWGN channel improves how empirical certificates approximate the demographic parity of the representation distribution. As the Gaussian noise $\sigma$ increases, the difference between the demographic parity of downstream test functions and of the empirical certificate decreases. For the Swiss Roll dataset (see Figure \ref{fig:robustness_error_sw}), with $\sigma^{2}>0.1$, the $\Delta(f_{n}^{plug}, t)$ empirical certificate upper bounds the demographic parity of any of the downstream test functions we built, regardless of their complexity. For the DSprites dataset, the empirical certificate approximates better the demographic parity obtained by the downstream test functions for $\sigma^{2}>0.02$ (see Figure \ref{fig:robustness_error_dsprites}). Moreover, the variance of $\Delta(f_{n}, t) - \Delta(f, t)$ decreases as the Gaussian noise increases. This is consistent with the upper bound in Theorem \ref{th: 3}, which decreases with smaller signal-to-noise ratio $||t||_{\infty}/\sigma$.

\textbf{Comparative adversarial approaches.} We benchmark the use of an AWGN channel with comparative approaches in fair representation learning based on an adversarial auditor ($AUD)$ trained with (i) a cross-entropy loss (AdvCE, \cite{edwards2015censoring});  or, with (ii)a group L1 loss (AdvL1, \cite{madras2018learning}). 

AdvCE is a fair representation learning method from \cite{edwards2015censoring}. The auditor is modeled as an adversarial neural network $f$ that predicts sensitive attributes from samples of the representation distribution and minimizes the following cross-entropy loss:
\begin{equation}
\label{eq: ce}
 \mathcal{L}_{CE}(f)=-\frac{1}{n}\displaystyle\sum_{i=1}^{n}s_{i}\log(f(x_{i}) + (1-s_{i})\log(1-f(x_{i})).
\end{equation}
Moreover, the autoencoder is trained to minimize a loss $\mathcal{L}_{rec} -\lambda\mathcal{L}_{CE}(f)$. 

AdvL1 (\cite{madras2018learning}) replaces the cross-entropy loss by a group L1 loss: instead of \eqref{eq: ce}, the adversary minimizes
\begin{equation}
    \mathcal{L}_{L1} = \frac{1}{n_{0}}\displaystyle\sum_{i, s_{i}=0}f(x_{i}) - \frac{1}{n_{1}}\displaystyle\sum_{i, s_{i}=1}f(x_{i}),
\end{equation}
and the autoencoder minimizes $\mathcal{L}_{rec} -\lambda\mathcal{L}_{L1}(f)$.

For both AdvCE and AdvL1, the autoencoder is the same as in the experiments in Figure \ref{fig: effect_noise}. The adversarial auditor is modeled as a neural network with seven hidden layers of $32$ neurons each. For both Swiss Roll and DSprites, the autoencoder is trained for $400$ epochs with a learning rate of $0.0001$; the adversary with a learning rate of $0.001$. For Swiss Roll, the fairness coefficient $\lambda$ is $20$ for both AdvCE and AdvL1 and $5$ for AWGN; for DSprites, $\lambda$ is $0.1$ for both AdvCE and AdvL1 and $0.15$ for AWGN. The downstream processor ($PROC$) stacks the (frozen) layers of the decoder $g$ with $4$ hidden neural layers of a fully connected network. Crucially for our experiment, the decoder does not use any information related to sensitive attribute.

Table \ref{tab: 1} measures the performance of empirical certificates as the difference $PROC-AUD$ between the unparity measured by $PROC$ and $AUD$: the lower the difference, the more reliable is the empirical certificate. For both comparative methods, the adversarial auditor $AUD$ estimates the representation distribution to be almost independent of sensitive attribute --  auditor's demographic parity is almost zero, but our downstream processor $PROC$ predicts with high accuracy the sensitive attribute and thus, has much higher demographic parity ($0.8$ for Swiss Roll and $0.4-0.5$ for DSprites). On the other hand, the introduction of an AWGN channel reduces the difference $PROC-AUD$ and guarantees that the empirical certificate estimated by the auditor $AUD$ approximates well the demographic parity of the downstream processor $PROC$. However, for the Swiss Roll dataset, a better approximation comes at the cost of a higher $L2-$ reconstruction loss.

\begin{table}[h]
\caption{\textbf{Robustness of fair representation learning}: This table compares for both Swiss Roll (SW) and DSprites (DS) datasets the use of an AWGN channel (\textbf{AWGN}) with alternative  fair representation learning methods that use adversarial auditors with cross-entropy loss (\textbf{AdvCE}, \cite{edwards2015censoring}) or group L1 loss (\textbf{AdvL1}, \cite{madras2018learning}). The lower the difference PROC-AUD between processor's (PROC) and auditor's (AUD) measure of unparity, the better empirical certificates approximate the representation's demographic parity. Results are the median of $100$ simulations for each method.}
\begin{center}
\begin{small}
\begin{sc}
\begin{tabular}{llllll}
\toprule
  Dataset & Model               & L2 & \multicolumn{3}{l}{Measured Unparity}\\
             &    & Loss & Aud & Proc & Proc-Aud \\
\midrule
SW &AdvCE & 0.09                  & 0.05       & 0.79 & 0.74        \\
SW &AdvL1 & 0.02                & 0.04       & 0.8  & 0.76       \\
SW &AWGN       & 1.19                  & 0.08       & 0.13  & \textbf{0.05}\\   
\midrule
DS & AdvCE & 0.01 & 0.02 & 0.54 & 0.52\\
DS & AdvL1 & 0.01 & 0.06 &0.39 & 0.33\\
DS & AWGN & 0.01 &0.14 &0.2 & \textbf{0.06}\\
\bottomrule
\end{tabular}
\label{tab: 1}
\end{sc}
\end{small}
\end{center}
\vskip -0.1in
\end{table}

Visually, Figure \ref{fig: sw} shows one of our simulation results for the Swiss Roll dataset and compares the representations generated by each comparative method and decoded by the downstream processor $PROC$. Although the three fair learning methods, AdvCE, AdvL1 and ours AWGN, certify the representation distribution to be almost independent of sensitive attribute, only the AWGN approach makes it really difficult to distinguish the sensitive attributes when looking at the decoded representation. 

\subsection{Application to Fair ML benchmark datasets}

\begin{figure}
     \begin{subfigure}{0.5\textwidth}
    \includegraphics[width=\linewidth]{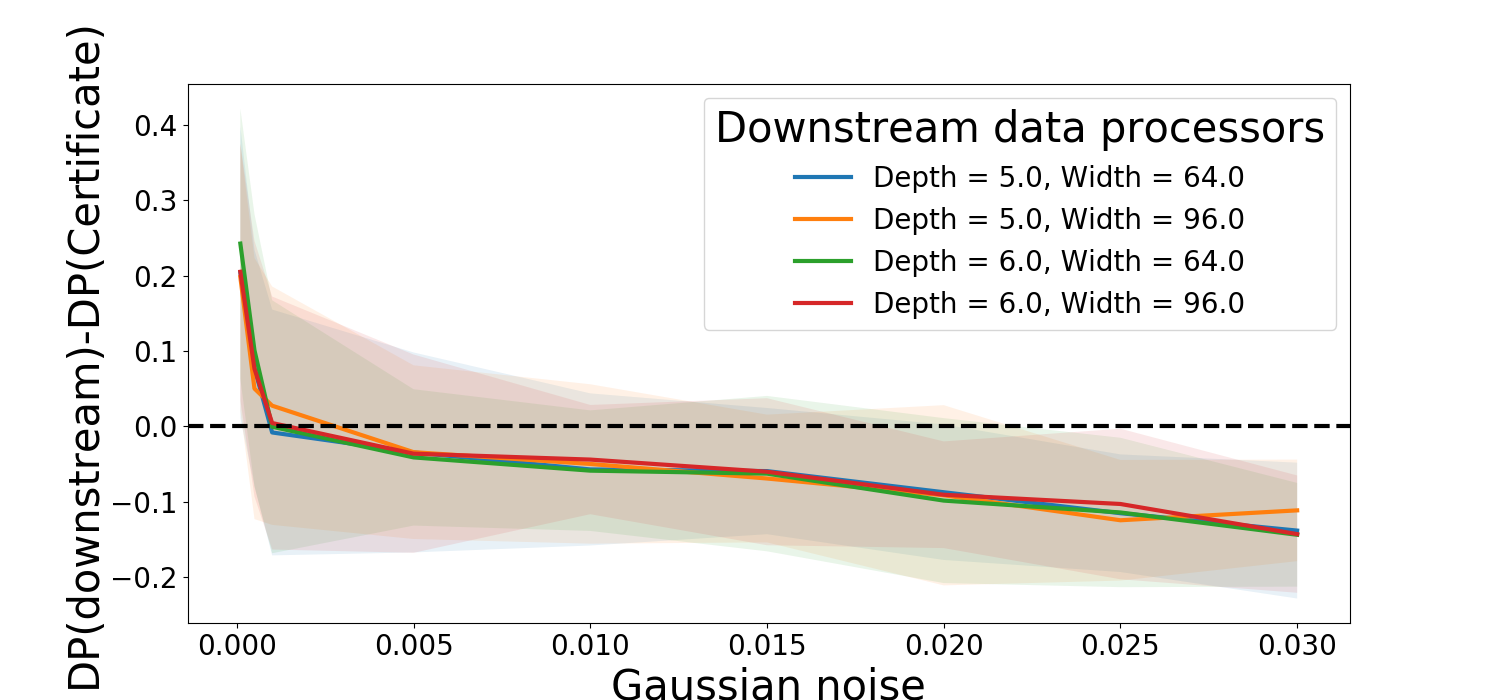}
    \caption{Adults dataset.}
    \label{fig: ad1}
    \end{subfigure}
     \begin{subfigure}{0.5\textwidth}
    \includegraphics[width=\linewidth]{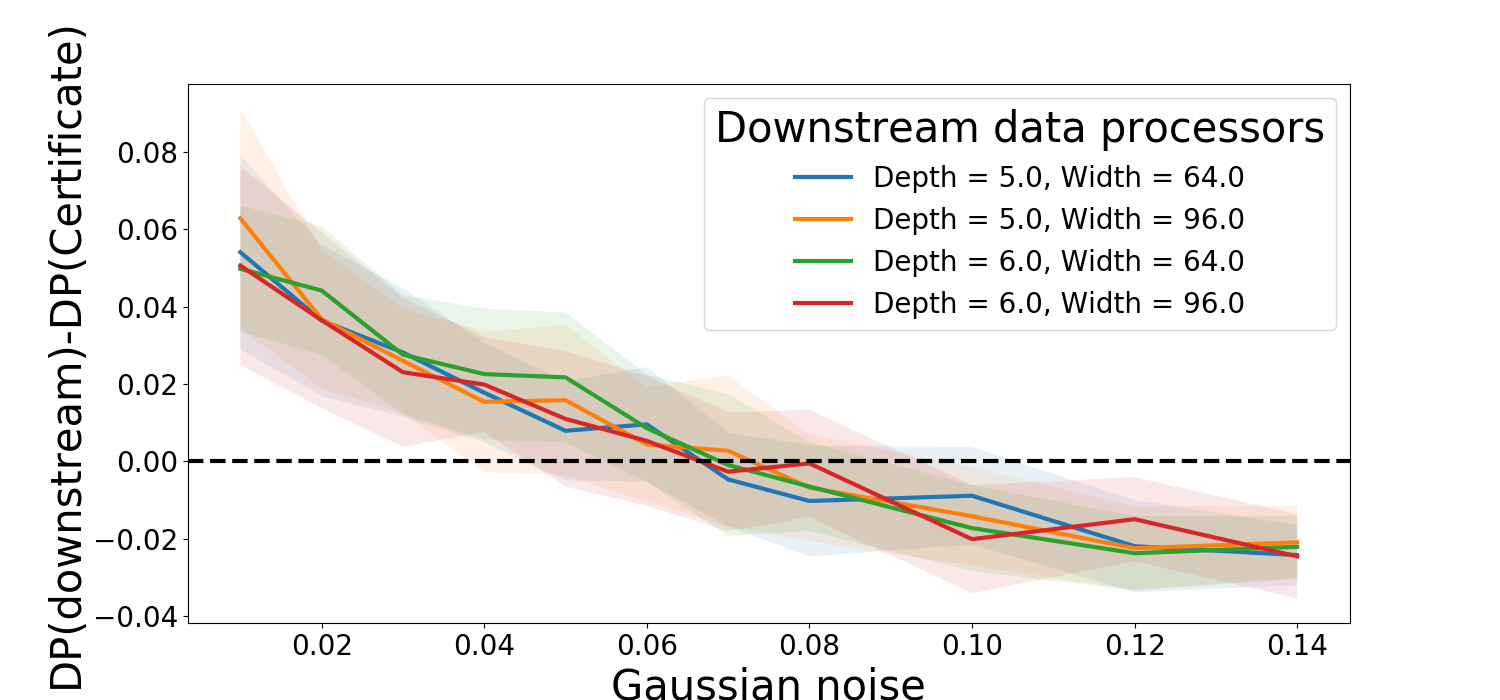}
    \caption{Heritage dataset.}
    \label{fig: ad2}
    \end{subfigure}
    \caption{\textbf{Generalization properties of empirical demographic parity certificates obtained by adding an additive Gaussian white noise (AWGN) channel to fair representation learning.} See  Figure \ref{fig: effect_noise}.}
    \label{fig: adh}
    \end{figure}

We apply our approach of fair representation learning with a AWGN channel to two fair learning benchmarks, Adults\footnote{https://archive.ics.uci.edu/ml/datasets/adult} and Heritage\footnote{https://foreverdata.org/1015/index.html}. The Adults dataset contains $49K$ individuals and includes information on $10$ features related to professional occupation, education attainment, race, capital gains, hours worked and marital status. The sensitive attribute is the gender to which individuals self-identify to. The data is split into a $34K$ train set and a $15K$ test set.

The Health Heritage dataset contains $220K$ individuals with $66$ features related to age, clinical diagnoses and procedure, lab results, drug prescriptions and claims payment aggregated over $3$ years. The sensitive attribute is the gender to which individuals self-identify to. After removing individuals with missing records, we split the data into a $142K/35K$ train/test split.

For both Adults and Heritage the autoencoder has seven hidden layers of $32$ neurons each and is trained for $400$ epochs with a learning rate of $0.0005$. The dimension of the representation latent space is $10$ for Adults and $24$ for Heritage. The fairness coefficient $\lambda$ is $3$ for Adults and $0.5$ for Heritage. Downstream test functions are trained as in Figure \ref{fig: effect_noise}. 


\textbf{Effect of noise on certificate reliability.} In Figure \ref{fig: adh}, downstream data processors probe the demographic parity of the learned representation distribution by predicting the gender of representations encoded from a test sample of $15K$ individuals for Adults and $35K$ individuals for Heritage. As in Figure \ref{fig: effect_noise}, we compare the demographic parity of these downstream processors to the empirical certificate estimated after training.
For both datasets, Figure \ref{fig: adh} confirms that (i) without the noisy channel (small $\sigma$), the empirical certificate underestimates the demographic unparity of downstream data processors; but, (ii) that the AWGN channel is sufficient for empirical certificates to upper-bound the demographic unparity obtained by various downstream users.

\begin{figure}
    \includegraphics[width=\linewidth]{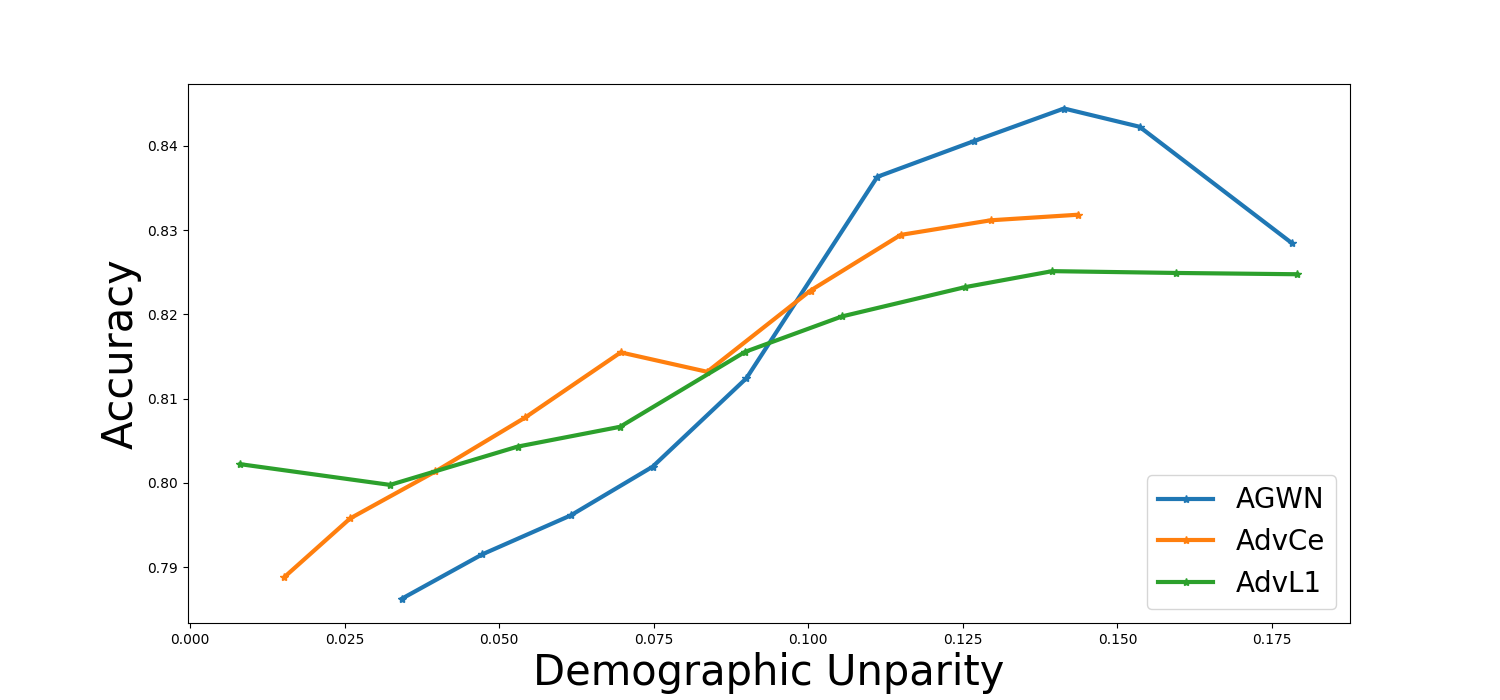}
    \caption{\textbf{Pareto Front for fair representation learning approaches.} This shows an accuracy-fairness trade-off by measuring the accuracy and the demographic parity of neural networks of various depth and width that detect wealthy individuals from representations of the Adult dataset.}
    \label{fig: pf}
    \end{figure}

\textbf{Accuracy-fairness trade-off.} To explore how the AWGN channel affects the information contained in the representation distribution, we retrain on the Adults dataset the three fair learning methods -- AdvCE, AdvL1 and AWGN -- but leave out the feature related to income. We generate from the test samples their corresponding representations and predict whether their income is over $50K$. 

We construct Pareto fronts (see \cite{madras2018learning}) by sweeping the parameter space for values of the fairness coefficient $\lambda$ between $0$ and $4$ and by training for each value of $\lambda$ an autoencoder for the three comparative methods, AdvCE, AdvL1 and AWGN. The AWGN channel uses a noise $\sigma$ of $0.015$. Both adversarial methods use auditors modeled as neural networks with six hidden layers that are trained with a learning rate of $0.005$. After we train the autoencoder, we freeze its weights, generate fresh samples from the representation distribution using the test data of $15K$ individuals, and predict whether an individual earns more than $50K$ a year. We apply this downstream task to four classifiers that we model as neural networks of depth varying from 3 to 6. We repeat the simulation $10$ times and thus obtain $40$ demographic parity/accuracy tuples for each value of $\lambda$ and each of the three fair representation learning methods (AdvCE, AdvL1 and AWGN). We bin the demographic parity values in $10$ buckets and report $75\%-$ quantile accuracy attained within each bin.


The resulting Pareto fronts in Figure \ref{fig: pf} show that compared to alternative fair learning methods, the AWGN channel does not appear to degrade significantly the accuracy-fairness trade-off: the trade-off is worse at low level of demographic parity and better at higher level of demographic parity. Although the AWGN channel limits the maximum amount of information that is transferred from the data to the representation (see \cite{cover2012elements}), it also allows for a better empirical approximation of demographic parity and thus helps guiding the representation mapping toward the correct fairness-information trade-off. 

\section{Conclusion}
This paper investigates whether a data controller could generate representations of the data with fairness guarantees that would hold for any downstream processor using samples from the representation distribution. We show that for demographic parity certificate to approximate well the demographic parity of all future data processors it is necessary and sufficient to bound the $\chi^{2}$ mutual information between feature and representation. To meet this condition, we show the benefit of adding an AWGN channel while learning a fair representation of the data.

Our work opens promising research avenues in fair representation learning.  An AWGN channel may be only one of many approaches to bound the $\chi^{2}$ mutual information between feature and representation. A comparison of competitive approaches would be crucial to improve the accuracy-fairness trade-off of learning reliably fair representations. 

\section{Appendix - Proofs of Results}

\subsection{Proof of Theorem \ref{th: 1}}
The proof of Theorem \ref{th: 1} uses the following lemma (from \cite{feldman2015certifying}) that links the demographic parity of a test function $f$ and its balanced error rate $BER(f)$,
\begin{equation}
    BER(f, t) = \frac{P(f(Z)=1 | S=0) + P(f(Z)=0|S=1)}{2},
\end{equation}
where we make the dependence on the representation mapping $t$ explicit in $BER(f, t)$. 
\begin{lem}\cite{feldman2015certifying}
\label{lem: ber}
A representation space $(\mathcal{Z}, \mu_{t})$ satisfies an $\Delta^{*}(t)-$ demographic parity certificate if and only if 
\begin{equation}
\label{eq: ber}
   BER^{*}(t) \triangleq \min\limits_{f: \mathcal{Z}\rightarrow \{0, 1\}}BER(f, t) \geq \frac{1- \Delta}{2}. 
\end{equation}
Therefore, a representation space $(\mathcal{Z}, \mu_{t})$ can be stamped with a $\Delta^{*}(t)-$ demographic parity certificate with $\Delta^{*}(t) \equiv 1 -2 BER^{*}(t)$. 
\end{lem}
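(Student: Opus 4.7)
The plan is to expose the algebraic identity $\Delta^{*}(t) = 1 - 2 BER^{*}(t)$ directly, from which the stated equivalence follows by a trivial rearrangement. The core observation is that the set of binary functions $f:\mathcal{Z}\to\{0,1\}$ is closed under the involution $f \mapsto 1-f$, which lets us dispose of the absolute value in the definition of $\Delta(f,t)$.

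First I would fix any binary $f$ and introduce the shorthand $p_{s} = E_{z\sim \mu_{t}^{s}}[f(z)] = P(f(Z)=1 \mid S=s)$ for $s\in\{0,1\}$. Since $f$ is $\{0,1\}$-valued, $\Delta(f,t) = |p_{1}-p_{0}|$, and by definition
\begin{equation}\nonumber
BER(f,t) = \frac{(1-p_{1}) + p_{0}}{2} = \frac{1 - (p_{1}-p_{0})}{2}.
\end{equation}
Applying the same identity to the complementary test $1-f$ (whose conditional means are $1-p_{0}$ and $1-p_{1}$) gives $BER(1-f,t) = (1-(p_{0}-p_{1}))/2$, and therefore
\begin{equation}\nonumber
\min\{BER(f,t),\, BER(1-f,t)\} = \frac{1 - |p_{1}-p_{0}|}{2} = \frac{1 - \Delta(f,t)}{2}.
\end{equation}

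Next I would take the minimum over all binary tests. Because the family of binary tests is invariant under $f\mapsto 1-f$, minimizing $BER(\cdot,t)$ over $f$ is the same as minimizing $\min\{BER(f,t),BER(1-f,t)\}$ over $f$, so
\begin{equation}\nonumber
BER^{*}(t) = \min_{f}\frac{1 - \Delta(f,t)}{2} = \frac{1 - \sup_{f}\Delta(f,t)}{2} = \frac{1 - \Delta^{*}(t)}{2}.
\end{equation}
This yields the identity $\Delta^{*}(t) = 1 - 2 BER^{*}(t)$ claimed in the second sentence of the lemma. The equivalence in \eqref{eq: ber} is then immediate: $\Delta^{*}(t)\leq \Delta$ if and only if $1 - 2 BER^{*}(t) \leq \Delta$, i.e.\ $BER^{*}(t) \geq (1-\Delta)/2$.

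There is essentially no technical obstacle here; the only point that requires a moment of care is the exchange of the absolute value for a plain difference, which is justified precisely by the closure of the binary test class under complementation. If one worried about the infimum in $BER^{*}(t)$ being attained, it is worth noting that the argument goes through with $\sup$ and $\inf$ in place of $\max$ and $\min$ with no change, so measurability or existence issues do not arise.
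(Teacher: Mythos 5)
Your proof is correct. The paper cites \cite{feldman2015certifying} for this lemma and does not reproduce a proof, so there is nothing in the paper to compare against; your derivation, via the complement-closure of the binary test class and the identity $\min\{BER(f,t),\,BER(1-f,t)\} = (1-\Delta(f,t))/2$, is the standard argument and establishes exactly the stated equivalence.
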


To prove the result in Theorem \ref{th: 1}, we consider a distribution $\mu$ over $\{1, 1/2, 1/3, ....\}$ and assume that $t$ is deterministic.  Therefore, since $t$ is deterministic, the representation distribution induced by $t$ is discrete and can be indexed by $k$. We denote $t(\mathcal{X})=\{z_{1}, z_{2}, ..., z_{K}\}$, with $K\leq \infty$ and $z_{k}\neq z_{k^{'}}$ for $k\neq k^{'}$. In this setting, the $\chi^{2}$ mutual information has an analytical form:

\begin{lem}
\label{lem: chi2}
Suppose that $t$ is a deterministic mapping from $\mathcal{X}$ to $\mathcal{Z}$. Assume that $t(\mathcal{X})=\{z_{1}, z_{2}, ..., z_{K}\}$ with $K\leq \infty$ and $z_{k}\neq z_{k^{'}}$ for $k\neq k^{'}$. Then, for all distribution over the features $\mathcal{X}$ such that $P(t(X)=z_{k}) > 0$, $I_{\chi^{2}}(X, Z)= K - 1$.
\end{lem}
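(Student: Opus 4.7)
The plan is a direct computation exploiting the fact that a deterministic mapping produces a Dirac conditional distribution $\mu_{t}(Z \mid X=x)$ concentrated at the single atom $t(x) \in \{z_1,\ldots,z_K\}$. Writing $p_k \triangleq P(t(X) = z_k) = \mu_t(z_k)$, the hypothesis guarantees $p_k > 0$ for every $k$, and $\sum_k p_k = 1$. The key observation is that for any $x$ with $t(x) = z_k$, the integrand $\frac{\mu_t(z) - \mu_t(Z\mid X=x)}{\mu_t(z)}$ equals $1$ at every atom $z_j$ with $j \neq k$ (since $\mu_t(Z=z_j \mid X=x)=0$), and equals $\frac{p_k - 1}{p_k}$ at $z_k$.

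With that reduction, the inner expectation over $Z$ collapses to a short sum. For fixed $x$ with $t(x)=z_k$,
\begin{equation*}
E_z\!\left[\left(\tfrac{\mu_t(z) - \mu_t(Z\mid X=x)}{\mu_t(z)}\right)^{\!2}\right]
= \sum_{j\neq k} p_j \cdot 1 \;+\; p_k\!\cdot\!\frac{(1-p_k)^2}{p_k^{2}}
= (1-p_k) + \frac{(1-p_k)^{2}}{p_k}
= \frac{1-p_k}{p_k}.
\end{equation*}
The next step is the outer expectation: since $x$ falls in the pre-image of $z_k$ with probability exactly $p_k$,
\begin{equation*}
I_{\chi^{2}}(X,Z) \;=\; \sum_{k=1}^{K} p_k \cdot \frac{1-p_k}{p_k} \;=\; \sum_{k=1}^{K}(1-p_k) \;=\; K - \sum_{k=1}^{K} p_k \;=\; K - 1.
\end{equation*}

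For the case $K = \infty$, the same computation yields $\sum_{k=1}^{\infty}(1-p_k)$, which diverges since $\sum_k p_k = 1$ forces $p_k \to 0$ and hence the tail terms are bounded below by, say, $1/2$ for all sufficiently large $k$. This matches the claim with the convention $\infty - 1 = \infty$.

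The proof is essentially routine once one identifies that $\mu_t(Z\mid X=x)$ is a Dirac on $t(x)$; the only mild care required is in the $K=\infty$ case, where one should justify that the double expectation can be interchanged with the sums (Tonelli, since all integrands are nonnegative) so that rearranging the double series is legitimate. No serious obstacle arises.
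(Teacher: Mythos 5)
Your proof is correct and follows essentially the same route as the paper: a direct computation of the $\chi^{2}$ mutual information using the fact that the conditional law $\mu_{t}(Z\mid X=x)$ is a point mass at $t(x)$, which collapses each inner expectation to $\frac{1-p_k}{p_k}$ and the outer average to $\sum_k(1-p_k)=K-1$. If anything, your version is slightly more careful than the paper's (which skips the off-atom terms in its first display and treats only finite $K$ explicitly), and your Tonelli remark for the $K=\infty$ case is a welcome addition rather than a deviation.
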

\begin{proof}
First, since $t$ is a function, $P(Z=z_{k}|X=x)$ is equal to one if and only if $t(x)=z_{k}$. Therefore,
\begin{equation}\begin{split}
    I_{\chi^{2}}(X, Z) &= E_{x}\left(1-\frac{1}{P(Z=t(x)}\right)^{2}P(Z=t(x)) \\
    & = E_{x}\left[\frac{1}{P(Z=t(x)} \right]- 1 \\
    & = \displaystyle\sum_{k=1}^{K}\left[\frac{P(X, t(X)=z_{k})}{P(Z=z_{k})}\right] - 1 \\
    & = K - 1
    \end{split}
\end{equation}
\end{proof}
For each $k\in \{1, ..., K\}$, we choose one $x_{k}\in \mathcal{X}$ such that $t(x_{k})=z_{k}$. We parametrize a family of joint distributions $\mu(b)$ over $[0,1]\times \{0, 1\}$ as follows: $X$ is uniformly distributed over $\{x_{1}, ..., x_{K}\}$; and, for $b\in (0, 1)$, the sensitive attribute is given by $k^{th}$ binary expansion of $b$, where $X=x_{k}$. By Lemma \ref{lem: cor2}, the $\chi^{2}$ squared mutual information between $X$ and $t(X)$ is the same for any $b$ and equal to $K-1$. Moreover, since the sensitive attribute is a function of $t(X)$, $\Delta^{*}_{b}(t)=1$, where the subscript indicates that demographic parity is computed using the joint distribution $\mu(b)$ over $(Z, S)$. 

Let $B$ denote a random variable uniformly distributed on $[0,1]$. For any auditor $f_{n}$,
\begin{equation}
\label{eq: chi_div1}
    \begin{split}
        \sup\limits_{b\in [0,1]} E_{\mathcal{D}_{n}(b)}BER(f_{n}, t) & \overset{\mathrm{(a)}}{\geq} E_{B}E_{\mathcal{D}_{n}(B)}BER(f_{n}, t) \\
        & = E_{X, B}P[f_{n}((t(X), \mathcal{D}_{n}(B)) \neq  \\
        & S| t(X_{1}), ..., t(X_{n}), S_{1}, ... S_{n}, t(X)]\\
       & \overset{\mathrm{(b)}}{\geq}\frac{1}{2} P\left( \cap_{i=1}^{n}[ t(X) \neq t(X_{i})] \right)\\
       & \overset{(c)}{=}\frac{1}{2}\left(1 -\frac{1}{K}\right)^{n} \\
       & \overset{(d)}{=} \frac{1}{2}\left(1 -\frac{1}{I_{\chi^{2}}(X, Z)}\right)^{n}
    \end{split}
\end{equation}
where $(a)$ uses that the suppremum is larger than the average; (b) that for $Z\notin\{Z_{1}, ..., Z_{n}\}$, the sensitive attribute has a Bernouilli distribution with probability $1/2$; (c) that $X$ and then $Z$ is uniformly distributed; and, (d) that $I_{\chi^{2}}(X, Z) \leq K$ by Lemma \ref{lem: chi2}. Since $I_{\chi^{2}}(X, Z)$ is equal for all $b$, it follows from Lemma \ref{lem: ber} that 

\begin{equation}
    \sup_{b\in (0, 1)}\Delta^{*} -\Delta(f_{n}, t) \geq \left(1 - \frac{1}{I_{\chi^{2}}(Z, X)}\right)^{n}.
\end{equation}
Therefore, there exists a distribution $\mu$ over $\mathcal{X}\times \{0, 1\}$ such that for all auditors $f_{n}$, 

\begin{equation}
    E_{\mathcal{D}_{n}}|\Delta^{*} -\Delta(f_{n}, t)| \geq \left(1 - \frac{1}{I_{\chi^{2}}(Z, X)}\right)^{n}.
\end{equation}

\subsection{Proof of Corollary \ref{cor: rates}}
Suppose that $\inf\limits_{f_{n}\in \mathcal{F}_{n}}\sup\limits_{\mu}E_{\mathcal{D}_{n}}|\Delta^{*} -\Delta(f_{n}, t) |\leq \epsilon_{n}$ for some $\epsilon_{n} > 0$. Let $f_{n}\in \mathcal{F}_{n}$ be the auditor that reaches the minimum.

We have, for any distribution $\mu$ over $\mathcal{X}\times \{0, 1\}$,
\begin{equation}
\label{eq: cor1_eq}
\begin{split}
\left(1 - \frac{1}{I_{\chi^{2}}(Z, X)}\right)^{n} & \leq 
    \sup_{\mu} \left(1 - \frac{1}{I_{\chi^{2}}(Z, X)}\right)^{n}  \\
    & \overset{(a)}{\leq} \sup\limits_{\mu}E_{\mathcal{D}_{n}}|\Delta^{*} -\Delta(f_{n}, t) | \\
    &\leq \epsilon_{n},
    \end{split}
\end{equation}
where $(a)$ uses Theorem \ref{th: 1}. The result follows directly from equation \eqref{eq: cor1_eq}.

\subsection{Proof of Corollary \ref{cor: 1}}
We first show the following Lemma:

\begin{lem}
\label{lem: cor2}
Let $t$ be a function from $\mathcal{X}$ to $\mathcal{Z}$. Suppose that there exists a distribution $\mu$ over $\mathcal{X}\times \{0, 1\}$ such that $I_{\chi^{2}}(X, Z)=\infty$. Then, there exists an infinite countable set $\{a_{k}\}$ of $\mathcal{X}$ such that for all $k\neq k^{'}$, $t(a_{k})\neq t(a_{k^{'}})$.
\end{lem}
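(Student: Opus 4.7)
I would prove the contrapositive: if there is no infinite countable set $\{a_k\}\subset\mathcal{X}$ with pairwise distinct $t$-values, then $I_{\chi^{2}}(X,Z)<\infty$ for every distribution $\mu$ over $\mathcal{X}\times\{0,1\}$, which contradicts the hypothesis and so proves the lemma.

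The first step in this contrapositive is to observe that the non-existence of such a countable set forces $t(\mathcal{X})$ itself to be finite. Were $t(\mathcal{X})$ infinite, one could enumerate a countably infinite subset $\{z_k\}_{k\ge 1}\subseteq t(\mathcal{X})$ and, by countable choice, pick a preimage $a_k\in t^{-1}(\{z_k\})$ for each $k$; the sequence $\{a_k\}_{k\ge 1}$ would then be an infinite countable set of points with pairwise distinct $t$-values, contradicting our standing assumption. Hence $K\triangleq|t(\mathcal{X})|<\infty$, and I may write $t(\mathcal{X})=\{z_1,\ldots,z_K\}$.

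The second step reuses the derivation carried out in the proof of Lemma \ref{lem: chi2}, which extends to any deterministic $t$ with finite image once atoms of zero mass are dropped from the sum. Since $t$ is a function, $P(Z=z_k\mid X=x)$ equals one when $t(x)=z_k$ and zero otherwise, so for any $\mu$
\begin{equation}
\nonumber
I_{\chi^{2}}(X,Z)=E_x\left[\frac{1}{P(Z=t(X))}\right]-1=\sum_{k:\,P(Z=z_k)>0}\frac{P(t(X)=z_k)}{P(Z=z_k)}-1\le K-1<\infty,
\end{equation}
which contradicts the existence of a $\mu$ making $I_{\chi^{2}}(X,Z)$ infinite.

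The only real point of care, which I would flag as the ``main obstacle'' although it is a light one, is the handling of image points with zero marginal probability: the statement of Lemma \ref{lem: chi2} explicitly assumes $P(t(X)=z_k)>0$ for every $k$, so I need to argue that dropping atoms of zero mass preserves the identity $I_{\chi^{2}}(X,Z)=K'-1\le K-1$, where $K'$ is the number of positive-mass atoms. This follows from the convention $0\cdot\infty=0$ when evaluating $P(t(X)=z_k)/P(Z=z_k)$ on such atoms, or equivalently by restricting attention to the support of $\mu_t$ from the outset. Aside from this bookkeeping, the extraction of the countable set of preimages uses only countable choice and the elementary fact that a set of infinite cardinality contains a countably infinite subset, so there is no genuine measure-theoretic subtlety on $\mathcal{X}$.
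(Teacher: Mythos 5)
Your proof is correct and follows essentially the same route as the paper: reduce the negation of the conclusion to the case where $t(\mathcal{X})$ is finite, then reuse the computation of Lemma \ref{lem: chi2} to conclude $I_{\chi^{2}}(X,Z)\le K-1<\infty$, contradicting the hypothesis. Your explicit treatment of preimage selection and of zero-mass atoms only spells out bookkeeping the paper leaves implicit.
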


\begin{proof}
The proof proceeds by contradiction. Assume that we have a partition $t(\mathcal{X})=\{a_{k}\}_{k=1}^{M}$ for a fixed $M<\infty$. Given that $t$ is a function, as in the proof of Lemma \ref{lem: chi2}, 

\begin{equation}
\label{eq: inf_chi}
    \begin{split}
        \infty &= I_{\chi^{2}}(X, Z) = E_{x}\left[\frac{1}{P(Z=t(x)}\right] - 1 \\ &=\displaystyle\sum_{k=1}^{M}\frac{\mu(\{x|t(x)= a_{k}\})}{P(t(X)=t(a_{k}))} - 1 \\
        & \overset{(a)}{=} M,
    \end{split}
\end{equation}
where $(a)$ uses the fact that $\mu(\{x|t(x)= a_{k}\})=  P(t(X)=t(a_{k})) $. Equation \eqref{eq: inf_chi} contradicts $M < \infty$. 
\end{proof}

Therefore, by Lemma \ref{lem: cor2}, if for a distribution $\mu$ over $\mathcal{X}\times\{0, 1\}$, $I_{\chi^{2}}(Z, X)=\infty$, then there exists an infinite countable set $\{a_{k}\}$ of $\mathcal{X}$ such that $t$ takes a different value at each $a_{k}$. We choose $X$ to take value in $\{a_{k}\}_{k\geq 1}$ such that $P(a_{k})=p_{k}$ for $k\geq 0$ where the sequence $\{p_{k}\}_{k=1}^{\infty}$ will be chosen later on. As in the proof of Theorem \ref{th: 1}, we parametrize a family of distributions over $\mathcal{X}\times\{0, 1\}$ by $b\in (0, 1)$ such that for $X\in \{a_{1}, ...\}$, the sensitive attribute $S$ is the $k^{th}$ term of $b'$s binary expansion, where $X=a_{k}$. Because $S$ is a deterministic function of $X$, $\Delta^{*}(t)=1$. 

Let $B$ denote a random variable uniformly distributed on $[0,1]$. The rest of the proof follows the same steps as in the proof of Theorem \ref{th: 1}. For a sample point $X_{i}$, we denote $k_{i}$ such that $X_{i}=a_{k_{i}}$. For any auditor $f_{n}$,
\begin{equation}
\label{eq: chi_div_cor}
    \begin{split}
        \sup\limits_{b\in [0,1]} E_{\mathcal{D}_{n}(b)}BER(f_{n}, t) & \overset{\mathrm{(a)}}{\geq} E_{B}E_{\mathcal{D}_{n}(B)}BER(f_{n}, t) \\
        & = E_{X, B}P[f_{n}((t(X), \mathcal{D}_{n}(B)) \neq  \\
        & S| t(X_{1}), ..., t(X_{n}), S_{1}, ... S_{n}, t(X)]\\
       & \overset{\mathrm{(b)}}{\geq}\frac{1}{2} P\left( \cap_{i=1}^{n}[ k \neq k_{i}]\right)\\
       & \overset{\mathrm{(c)}}{=}\frac{1}{2}\displaystyle\sum_{k=1}^{\infty}p_{k}(1 - p_{k})^{n}\\
    \end{split}
\end{equation}
It remains to show that for all $\epsilon >0$, we can choose $\{p_{k}\}$ such that the right hand side of inequality \eqref{eq: chi_div_cor} is at least $1/2(1-\epsilon)$. Let $\epsilon >0$. We choose $p_{k}$ as follows. First, pick $K> \frac{1}{1-(1-\epsilon)^{1/n}}$.Then, let $p_{k}=1/K$ for $1 \leq k\leq K$ and $p_{k}=0$ elsewhere. It follows that
\begin{equation}
    \sup\limits_{b\in [0,1]} E_{\mathcal{D}_{n}(b)}BER(f_{n}, t) \geq \frac{1}{2}\left(1-\frac{1}{K}\right)^{n}\geq \frac{1}{2}(1-\epsilon).
\end{equation}

Therefore, using Lemma \ref{lem: ber}, we can conclude that for all $\epsilon >0$, there exists a distribution over $\mathcal{X}\times \{0, 1\}$ such that for all auditors $f_{n}$
\begin{equation}
    \Delta^{*}(t) - \Delta(f_{n}, t) \geq 1 -\epsilon. 
\end{equation}

\subsection{Examples of Representation Mappings without Finite Sample Guarantees}
\textbf{Injective mappings.} Suppose that $t$ is injective from $[0, 1]^{D}$ to $\mathbb{R}^{d}$. 

Consider $X$ distributed over the countable and infinite set $\{1, 1/2,... 1/k,....\}$ with $p_{k}= \kappa/k^{2}$ and $k^{-1}=\sum_{k=1}^{\infty}1/k^{2}$. By lemma \ref{lem: chi2}, $I_{\chi^{2}}(X, Z)=\infty$ and thus, by Corollary \ref{cor: 1}, there exists a distribution such that $\Delta^{*}(t) - \Delta(f_{n}, t)=1$ for all $f_{n}$. 

\textbf{Large $t(\mathcal{X})$.} Suppose that $|\{t(x)| x\in\mathcal{X}\}| \geq n/(\ln(n))^{\alpha}$, for some $\alpha < 1$.

By Lemma \ref{lem: chi2}, $I_{\chi^{2}}(X, Z)\geq n/(\ln(n))^{\alpha}-1$ and thus, by Corollary \ref{cor: rates}, if $\inf\limits_{f_{n}\in \mathcal{F}_{n}}\sup\limits_{\mu} E_{\mathcal{D}_{n}}|\Delta^{*}(t) - \Delta(f_{n}, t) | = \epsilon_{n}$, then 
\begin{equation}
\begin{split}
    \frac{n}{(\ln(n))^{\alpha}} - 1 &\leq I_{\chi^{2}}(X, Z) \leq \frac{1}{1-\epsilon_{n}^{\frac{1}{n}}} \\
    & \overset{(a)}{\leq}\frac{n}{-\ln(\epsilon_{n})},
    \end{split}
\end{equation}
where $(a)$ uses that $e^{-x} \geq 1-x$. Therefore, $\epsilon_{n} \geq e^{-(\ln(n))^{\alpha}}=\omega(n^{-s})$ for $s>0$, since $\alpha < 1$. 

\subsection{Proof of Theorem \ref{th: 2}}
The proof Theorem \ref{th: 2} relies on a upper bound of $\Delta^{*}(t) - \Delta(f_{n}, t)$ that uses the total variation distance $TV(\mu_{t}^{s}, \mu_{n}^{s})$ between class conditional densities and their empirical counterpart:
\begin{equation}
    TV(\mu_{t}^{s}, \mu_{n}^{s}) = \displaystyle\int |\mu_{t}^{s} - \mu_{n}^{s})|dz.
\end{equation}

\begin{lem}
\label{lem: appendix 1}
Consider a sample $\{(z_{i}, s_{i})\}_{i=1}^{n}$ from a representation distribution $\mu_{t}$ induced by a representation rule $t$. Suppose that $\mu_{n}^{0}$ and $\mu_{n}^{1}$ are empirical density estimators of $P(Z|S=0)$ and $P(Z|S=1)$ respectively.  Denote $f_{n}$ the following auditing plug-in decision: for $z\in \mathcal{Z}$, $f_{n}(z) = 1$ if and only if $\mu_{n}^{1}(z) > \mu_{n}^{0}(z)$. Therefore, for all $n$
\begin{equation} \Delta(f_{n}, t) \leq \Delta^{*}(t) \leq \Delta(f_{n}, t) + 2\displaystyle\sum_{i=0, 1}TV(\mu_{t}^{i}, \mu_{n}^{i}).
\end{equation}
\end{lem}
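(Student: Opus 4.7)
The lemma splits into two halves. The left inequality $\Delta(f_n,t)\le \Delta^*(t)$ is immediate: once the sample $\mathcal{D}_n$ is fixed, $f_n$ is a deterministic $\{0,1\}$-valued function on $\mathcal{Z}$, hence one of the test functions over which the supremum in \eqref{eq: cert} is taken. So I will focus on the right inequality.

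The plan is to introduce the auxiliary quantity $\tilde\Delta \triangleq \sup_{f:\mathcal{Z}\to\{0,1\}}\bigl|\int f\,\mu_n^1\,dz - \int f\,\mu_n^0\,dz\bigr|$, i.e.\ the empirical analogue of $\Delta^*(t)$ in which the class-conditional densities $\mu_t^s$ have been replaced by their empirical estimators $\mu_n^s$, and then to sandwich $\tilde\Delta$ between $\Delta(f_n,t)$ and $\Delta^*(t)$ up to a total-variation penalty. The argument has two ingredients.

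Ingredient one is the optimality of the plug-in rule for the empirical problem: $\tilde\Delta$ is attained at $f_n$. This is a standard Bayes/Neyman--Pearson computation (morally Lemma~\ref{lem: ber} applied to $\mu_n$ instead of $\mu_t$): writing any $\{0,1\}$-valued test as $\mathbf{1}_A$ and using $\int(\mu_n^1-\mu_n^0)\,dz=0$, the quantity $\bigl|\int_A(\mu_n^1-\mu_n^0)\,dz\bigr|$ is maximized by $A=\{\mu_n^1>\mu_n^0\}=\{f_n=1\}$, with maximum value $\tfrac{1}{2}\int|\mu_n^1-\mu_n^0|\,dz$. Hence $\tilde\Delta=\bigl|\int f_n(\mu_n^1-\mu_n^0)\,dz\bigr|$. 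Ingredient two is a transfer inequality: for every $\{0,1\}$-valued $f$ and each $s\in\{0,1\}$, $\bigl|\int f(\mu_t^s-\mu_n^s)\,dz\bigr|\le \int|\mu_t^s-\mu_n^s|\,dz=TV(\mu_t^s,\mu_n^s)$, since $|f|\le 1$. Combined with the (reverse) triangle inequality inside the definition of $\Delta$, this yields $\bigl|\Delta(f,t)-\Delta(f,\mu_n)\bigr|\le \sum_{s}TV(\mu_t^s,\mu_n^s)$ for every $f$, writing $\Delta(f,\mu_n)\triangleq \bigl|\int f(\mu_n^1-\mu_n^0)\,dz\bigr|$.

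Taking the supremum over $f$ gives $|\Delta^*(t)-\tilde\Delta|\le \sum_s TV(\mu_t^s,\mu_n^s)$, and specializing to the single test $f=f_n$ (using ingredient one, so that $\Delta(f_n,\mu_n)=\tilde\Delta$) gives $|\Delta(f_n,t)-\tilde\Delta|\le \sum_s TV(\mu_t^s,\mu_n^s)$. Chaining the two bounds, $\Delta^*(t)\le \tilde\Delta + \sum_s TV(\mu_t^s,\mu_n^s)\le \Delta(f_n,t)+2\sum_s TV(\mu_t^s,\mu_n^s)$, which is the claimed upper bound. The only non-routine piece is ingredient one (optimality of $f_n$ with respect to $\mu_n$), but this is essentially the same Bayes-rule identity that underlies Lemma~\ref{lem: ber}, just applied to the empirical densities; everything else is the triangle inequality for total variation, so I do not anticipate any real obstacle.
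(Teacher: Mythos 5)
Your proof is correct, but it takes a different route from the paper's. The paper never introduces the empirical discrepancy $\tilde\Delta$: it works through balanced error rates, invoking Lemma~\ref{lem: ber} to rewrite $\Delta^{*}(t)=1-2BER(f^{*},t)$ and $\Delta(f_{n},t)\ge 1-2BER(f_{n},t)$, and then bounds the excess risk $BER(f_{n},t)-BER(f^{*},t)$ by the classical plug-in argument (Devroye et al., ch.~2): pointwise, $\max_{i}\eta_{i}(z)-\eta_{f_{n}(z)}(z)\le\sum_{i}|\eta_{i}(z)-\eta_{n,i}(z)|$, and integrating against $\mu_{t}$ turns the posterior errors into $\sum_{i}TV(\mu_{t}^{i},\mu_{n}^{i})$. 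You instead decompose $\Delta^{*}(t)-\Delta(f_{n},t)$ through $\tilde\Delta=\sup_{f}|\int f\,(\mu_{n}^{1}-\mu_{n}^{0})\,dz|$, using (i) exact optimality of the plug-in rule for the empirical densities (the total-variation/Neyman--Pearson identity, which does require $\mu_{n}^{0},\mu_{n}^{1}$ to be normalized densities so that $\int(\mu_{n}^{1}-\mu_{n}^{0})\,dz=0$ --- true for the estimators \eqref{eq: mu_n} used later) and (ii) the bound $|\int f\,(\mu_{t}^{s}-\mu_{n}^{s})\,dz|\le TV(\mu_{t}^{s},\mu_{n}^{s})$ for $|f|\le 1$ together with the $1$-Lipschitz stability of suprema. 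Both decompositions yield the same constant $2$; your version is more elementary and self-contained (no appeal to Lemma~\ref{lem: ber} or to posteriors, and no case analysis on where the maximum $\max_i\eta_i$ is attained), and in fact with the paper's convention $TV=\int|\cdot|\,dz$ your transfer step could be sharpened by a factor $1/2$. What the paper's BER route buys is consistency with the rest of the text, where $\Delta^{*}$, the plug-in certificate, and the training loss $\mathcal{L}_{DP}$ are all phrased through balanced error rates and posteriors $\eta$ (e.g.\ in the proof of Theorem~\ref{th: 4}).
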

\begin{proof}
Let $f^{*}$ denote the auditing rule that minimzes the balance error rate. Using \cite{devroye2013probabilistic} (ch 2), we show that for any auditing rule $f_{n}$
\begin{equation}
\label{eq: bn}
\begin{split}
2 - \displaystyle \int \eta_{f_{n}(z)}(z) \mu_{t}(dz)& =  2 - \displaystyle\sum_{i=0, 1} \int_{f_{n}(z)=i}\eta_{i}(z)\mu_{t}(dz)\\
& =  2 - \displaystyle\sum_{i=0, 1} \int_{f_{n}(z)=i}P(z|S=i)dz \\
& =2BER(f_{n}),
\end{split}
\end{equation}
where $\eta_{i}(z)$ is the balanced posteriori probability $\eta_{i}(z) = P(S=i|Z=z)/P(S=i)$. Moreover, 
\begin{equation}
\label{eq: bstar}
\begin{split}
2BER(f^{*}) & =2-  P(f^{*}(z)=1|S=1] \\ 
    & - P(f^{*}(z)=0|S=0)\\
 & = 2 - \displaystyle \int_{z, \mu_{t}^{1} > \mu_{t}^{0}} \mu_{t}^{1}(dz) - \displaystyle \int_{z, \mu_{t}^{0} > \mu_{t}^{1}} \mu_{t}^{0}(dz) \\
 & =2 - \displaystyle \int \max_{i}\eta_{i}(z) \mu_{t}(dz).
\end{split}
\end{equation}
Let denote $\eta_{n, i}$ the empirical estimate of $\eta_{i}$. Using equations \eqref{eq: bn} and \eqref{eq: bstar}, the proof of lemma \ref{lem: appendix 1} relies on the fact that

\begin{equation}
\label{eq: ber_tv}
\begin{split}
BER(f_{n})-BER(f^{*}) &= \displaystyle \int \max_{i}\eta_{i}(z) \mu_{t}(dz) \\ & - \displaystyle \int \eta_{f_{n}(z)}(z)  \mu_{t}(dz) \\
& = \displaystyle \int (\max_{i}\eta_{i}(z) - \max_{i}\eta_{n, i}(z)) \mu_{t}(dz)\\
& + \displaystyle \int (\eta_{n, f_{n}(z)}(z) - \eta_{f_{n}(z)}(z)) \mu_{t}(dz)\\
& \overset{(a)}{\leq} \displaystyle\sum_{i=0,1} \int |\eta_{i}(z) -\eta_{n, i}(z)| \mu_{t}(dz)\\
&= \displaystyle\sum_{i=0,1} \int |\mu^{i}_{t}(z) -\mu^{i}_{n}(z)|dz,
\end{split}
\end{equation}
 The inequality $(a)$ comes from the following observation. If the maxima are attained for the same $i\in \{0, 1\}$, then the right hand side integrand is equal to 0. Otherwise, suppose without loss of generality that $\max \eta_{i}(z)$ is reached for $i=0$, then the right hand side integrand is 

\begin{equation}
\begin{split}
\eta_{0}(z) - \eta_{n, 1}(z) + \eta_{n, 1}(z) -  \eta_{1}(z) &  =\eta_{0}(z) - \eta_{n, 0}(z) \\ & + \eta_{n, 1}(z) -  \eta_{1}(z) \\ &+ \eta_{n, 0}(z) - \eta_{n, 1}(z) \\
&\leq |\eta_{0}(z) - \eta_{n, 0}(z)| \\ &+ |\eta_{1}(z) - \eta_{n, 1}(z)|,
\end{split}
\end{equation}
where the inequality follows $\max_{i}\eta_{n, i}(z)=\eta_{n, 1}(z)$. The same argument can be applied when $\max \eta_{i}(z)=\eta_{1}(z)$. The result in lemma \ref{lem: appendix 1} follows from \eqref{eq: ber_tv}.
\end{proof}

The second part of the proof of theorem \ref{th: 2} is to show that the total variation distance between $\mu_{n}^{s}$ and $\mu_{t}^{s}$ is $O(1/\sqrt{n_{s}})$ for some empirical estimate of $\mu_{t}^{s}$:
\begin{lem}
\label{lem: appendix 2}
Consider a representation mapping $t:\mathcal{X}\rightarrow \mathcal{Z}$ and its induced distribution $\mu_{t}$. Assume that $I_{2}(Z, X) < \infty$. Then, for $s=0, 1$, define $\mu_{n}^{s}$ as 
\begin{equation}
\label{eq: mu_n}
    \mu_{n}^{s}(z)=\frac{1}{n_{s}}\displaystyle\sum_{i=1, s_{i}=s}^{n}P(z|X=x_{i})
\end{equation}
The total variation between $\mu^{s}_{t}$ and $\mu_{n}^{s}$ can be bounded as follows:
\begin{equation}\nonumber
   E_{\mathcal{D}\sim \mathcal{X}^{n}}\left[TV(\mu_{t}^{s}, \mu_{n}^{s})\right] \leq \sqrt{\frac{I_{2}(Z, X)}{n_{s}}}.
\end{equation}
\end{lem}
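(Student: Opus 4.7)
\textbf{Proof proposal for Lemma \ref{lem: appendix 2}.} The plan is to combine two applications of Cauchy--Schwarz (one pointwise in $z$ to convert an $L^1$ deviation into a variance, one against the weight $\sqrt{\mu_t^s(z)}$ to bring in the $\chi^2$ mutual information) with the fact that $\mu_n^s$ is an unbiased ``kernel'' estimator of $\mu_t^s$ whose ``kernel'' at the point $x_i$ is the conditional density $P(\cdot \mid X=x_i)$.

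First, I would observe that because the samples with $s_i = s$ are i.i.d.\ draws from $P(X \mid S=s)$ and $\mu_t^s(z) = E_{X \mid S=s}[P(z \mid X)]$, the estimator $\mu_n^s(z)$ in \eqref{eq: mu_n} satisfies $E[\mu_n^s(z)] = \mu_t^s(z)$ and
\begin{equation}\nonumber
\mathrm{Var}(\mu_n^s(z)) \;=\; \frac{1}{n_s}\,\mathrm{Var}_{X \mid S=s}\!\bigl(P(z \mid X)\bigr).
\end{equation}
Next, using Fubini to swap the expectation with the $z$-integral and then applying $E|Y| \le \sqrt{E[Y^2]}$ pointwise in $z$, I would get
\begin{equation}\nonumber
E\bigl[TV(\mu_t^s, \mu_n^s)\bigr] \;=\; \int E\bigl|\mu_n^s(z) - \mu_t^s(z)\bigr|\, dz \;\le\; \frac{1}{\sqrt{n_s}} \int \sqrt{\mathrm{Var}_{X \mid S=s}\!\bigl(P(z \mid X)\bigr)}\, dz.
\end{equation}

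The second, and key, step is to turn this $L^1$ norm of a standard deviation into the $\chi^2$ mutual information. I would factor the integrand as $\sqrt{\mathrm{Var}(P(z \mid X)) / \mu_t^s(z)} \cdot \sqrt{\mu_t^s(z)}$ and apply Cauchy--Schwarz; since $\int \mu_t^s(z)\, dz = 1$, this yields
\begin{equation}\nonumber
\int \sqrt{\mathrm{Var}_{X \mid S=s}\!\bigl(P(z \mid X)\bigr)}\, dz \;\le\; \sqrt{\int \frac{\mathrm{Var}_{X \mid S=s}\!\bigl(P(z \mid X)\bigr)}{\mu_t^s(z)}\, dz}.
\end{equation}
Expanding the variance as $E_{X\mid S=s}[P(z\mid X)^2] - \mu_t^s(z)^2$ and dividing by $\mu_t^s(z)$, the integral under the square root collapses via Fubini to $E_{X\mid S=s}\!\left[\int P(z\mid X)^2/\mu_t^s(z)\,dz\right] - 1$, which is exactly $I_{\chi^2}(X,Z \mid S=s)$ (i.e.\ the $I_2$ appearing in the statement) after expanding $\chi^2\bigl(P(Z\mid X)\,\|\,\mu_t^s\bigr)$. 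Chaining the two inequalities gives the claimed bound.

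The routine parts are Fubini and the Cauchy--Schwarz manipulations; the only step that requires a small insight is the choice of weight $\sqrt{\mu_t^s(z)}$ in the second Cauchy--Schwarz, since that is what produces the $\chi^2$-divergence form rather than some other loose bound on $\int \sqrt{\mathrm{Var}}\,dz$. A secondary subtlety is the conditioning on $n_s$: strictly speaking the $n_s$ samples with $s_i = s$ are i.i.d.\ only conditionally on $n_s$, so the variance identity above should be read conditionally and then averaged, but this does not affect the bound since $1/\sqrt{n_s}$ already appears inside the expectation on the left-hand side.
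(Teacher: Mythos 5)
Your proof is correct and is essentially the paper's own argument: both rest on unbiasedness of the mixture estimator, independence across the $n_{s}$ samples (so the second moment collapses to $I_{\chi^{2}}/n_{s}$), and a Cauchy--Schwarz step against the density weight that produces the $\chi^{2}$ mutual information. The only difference is the order of operations --- you take the expectation over the sample pointwise in $z$ first and then apply the weighted Cauchy--Schwarz, whereas the paper applies Cauchy--Schwarz to the total variation first and then pulls the expectation inside the square root by concavity --- which yields the same bound $\sqrt{I_{2}(Z,X)/n_{s}}$.
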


The upper bound of the total variation distance uses a Monte Carlo integration argument. For a sample $\mathcal{D}_{n}=\{x_{i}\}_{i=1}^{n}$, denote $\phi(z, x_{i})$ the probability $P(Z=z|X=x_{i})$. Therefore, $\mu_{t}(z) = E_{x\sim \mathcal{X}}[\phi(z, x)]$ and if $\mu_{n}^{s}$ is defined as in \eqref{eq: mu_n}, $\mu_{t}^{s}(z) = E_{{\bf X}, S=s}[\mu_{n}^{s}]$, where ${\bf X}=\{x_{i}\}_{i=1}^{n}\sim \mathcal{X}^{n}$.  Denote 
\begin{equation}
    \mathcal{E}^{s}({\bf X})=\displaystyle\int \left\vert\mu_{t}(z) -\frac{1}{n_{s}}\displaystyle\sum_{i=1, s_{i}=s}^{n}\phi(z, x_{i}) \right \vert dz,
\end{equation}
with $n_{s}=|\{i| s_{i}=s\}|$. We have
\begin{equation}
\label{eq: exp}
    \begin{split}
         E_{{\bf X}} [\mathcal{E}^{s}({\bf X})]  \overset{\mathrm{(a)}}{\leq}E_{{\bf X}}\left[\sqrt{\displaystyle\int \left(\frac{\mu_{t}(z) -\mu_{n}^{s}(z)}{\mu_{t}(z)} \right)^{2} \mu_{t}(z)dz}\right] &\\
         \overset{\mathrm{(b)}}{=} \frac{1}{n_{s}}E_{{\bf X}}\left[\sqrt{\displaystyle\int \displaystyle\sum_{i=1, s_{i}=s}^{n}\left(\frac{\mu_{t}(z) -\phi(z, x_{i})}{\mu_{t}(z)} \right)^{2} \mu_{t}(z)dz}\right] &\\
        \overset{\mathrm{(c)}}{\leq}\frac{1}{n_{s}}\sqrt{E_{{\bf X}}\left[\displaystyle\int \displaystyle\sum_{i=1, s_{i}=s}^{n}\left(\frac{\mu_{t}(z) -\phi(z, x_{i})}{\mu_{t}(z)} \right)^{2} \mu_{t}(z)dz\right]}&\\
         \overset{\mathrm{(d)}}{=} \frac{1}{n_{s}}\sqrt{ \displaystyle\sum_{i=1, s_{i}=s}^{n}E_{{\bf X}}\left[\displaystyle\int \left(\frac{\mu_{t}(z) -\phi(z, x_{i})}{\mu_{t}(z)} \right)^{2} \mu_{t}(z)dz\right]} &\\
        \overset{\mathrm{(e)}}{=} \sqrt{\frac{I_{2}(Z, X)}{n_{s}}}, &\\
    \end{split}
\end{equation}
where $(a)$ applies Cauchy-Schwarz inequality; $(b)$ uses the fact that the samples are independently drawn and that $E_{x_{i}}[\phi(z, x_{i})]=\mu_{t}(z)$; $(c)$ that  the squared-root is concave; $(d)$ that expectation and integral can be interchange; and, $(e)$ the definition of the chi-squared mutual information between $Z$ and $X$.

Putting lemma \ref{lem: appendix 1} and \ref{lem: appendix 2} together, we get the upper bound in theorem \ref{th: 2}.

\subsection{\texorpdfstring{$\chi^{2}$}{e} versus Classic Mutual Information}
Features are uniformly distributed over $[0, 1]$ and $t(x)=i$ for $x\in [1/(i+ 1), 1/i))$ and $i>0$. For each $i>0$, the sensitive attribute is constant over $[1/(i+ 1), 1/i))$ and equal to $1$ with probability $1/2$.

Form Lemma \ref{lem: chi2}, it is clear that $I_{\chi^{2}}(X, Z)=\infty$. On the other hand, we can show that the classic mutual information between $X$ and $Z$, $I_{Sh}(X, Z)$ is bounded. Since $t$ is deterministic,
\begin{equation}
\begin{split}
   I_{Sh}(X, Z) &=  \displaystyle\sum_{i=1}^{\infty}\frac{\ln(i(i+1))}{i(i+1)} \\
   & \leq \frac{\ln(2)}{2} +  \displaystyle\int_{1}^{\infty}\frac{\ln(x(x+1))}{x^2}dx\\
   & \overset{(a)}{=}  \frac{\ln(2)}{2} + 1 + \displaystyle\int_{1}^{\infty}\frac{1}{x(x+1)}dx \\
   &  \overset{(b)}{\leq }  \frac{\ln(2)}{2} + 2 < \infty, 
   \end{split}
\end{equation}
where $(a)$ and $(b)$ use integration by part and $(b)$ the fact that $1/x \geq 1/(x+1)$.

\subsection{Proof if Theorem \ref{th: 3}}
We only prove the upper bound on the $\chi^{2}$ mutual information since the remaining results in Theorem \ref{th: 3} follow directly from Theorem \ref{th: 2}.

Since the mapping $(p, q) \rightarrow q(p/q -1)^{2}$ is convex and since $Z$ is an infinite mixtures of Gaussians, we have that for $x\in\mathcal{X}$
\begin{equation}
\begin{split}
\label{eq: gauss_chi}
    \displaystyle\int \left(\frac{\mu_{t*\sigma}(z|X=x)}{\mu_{t*\sigma}(z)} - 1\right)^{2}\mu_{t*\sigma}(z) dz & \\
    \leq \displaystyle\int\int \left(\frac{\mu_{t*\sigma}(z|X=x)}{\mu_{t*\sigma}(z|X=x^{'})} - 1\right)^{2}\mu_{t*\sigma}(z|X=x^{'}) dz\mu(dx^{'})& \\,
    \overset{(a)}{=} \displaystyle\int \chi^{2}(z|X=x)|| z|X=x^{'}) \mu(dx^{'}), & \\
    \end{split}
\end{equation}
where we use Fubini Theorem to invert the summation over $z$ and $x^{'}$ and $(a)$ uses the definition of the $\chi^{2}$ divergence between $p(z|X=x)$ and  $p(z|X=x^{'}$. Since both $p(z|X=x)$ and  $p(z|X=x^{'}$ are Gaussians with variance $\sigma^{2}$ and mean $t(x)$ and $t(x^{'})$, respectively, the integrand in the right hand side of \eqref{eq: gauss_chi} can be computed analytically as 
\begin{equation}
\begin{split}
    \chi^{2}(z|X=x)|| z|X=x^{'}) = & \\ \frac{1}{2}\left[\exp\left(\frac{||t(x)-t(x^{'})||_{2}}{\sigma^{2}}\right) -1\right]. & \\
    \end{split}
\end{equation}
Therefore, 
\begin{equation}
\begin{split}
  I_{\chi^{2}}(X, Z) &\leq \frac{1}{2} E_{x, x^{'}} \left[\exp\left(\frac{||t(x)-t(x^{'})||_{2}^{2}}{\sigma^{2}}\right)\right] \\
  & \leq \frac{1}{2}\exp\left(\frac{2||t||_{\infty}^{2}}{\sigma^{2}}\right).
  \end{split}
\end{equation}

\subsection{Proof of Theorem \ref{th: 4}}
By \cite{zhao2013beyond}, we know that the balanced error rate of the optimal auditor $f^{*}$ is given by
\begin{equation}
\begin{split}
    BER(f^{*})&=\frac{1}{2} \displaystyle\int \min(\eta(z, 0), \eta(z, 1)) \mu_{t*\sigma}(dz) \\
    & =\frac{1}{4}\displaystyle\int(\eta(z, 0) + \eta(z, 1)) \mu_{t*\sigma}(dz)\\ 
    & -\frac{1}{4}\displaystyle\int|\eta(z, 0)- \eta(z, 1)| \mu_{t*\sigma}(dz) \\
    & \overset{(a)}{=} \frac{1}{2}-\frac{1}{4}\displaystyle\int|\eta(z, 0)- \eta(z, 1)| \mu_{t*\sigma}(dz),
    \end{split}
\end{equation}
where $(a)$ uses the definition of $\eta(z, s)= P(Z=z|S=s)/P(z)$. Therefore, by Lemma \ref{lem: ber}, 

\begin{equation}
    \mathcal{L}_{DP}(\mu_{t, \sigma})=\frac{1}{2}\displaystyle\int |\mu_{t, \sigma}^{0}(z) - \mu_{t, \sigma}^{1}(z)|dz
\end{equation}
and that
\begin{equation}
    \mathcal{L}_{DP}(\mu_{n, \sigma})=\frac{1}{2}\displaystyle\int |\mu_{n, \sigma}^{0}(z) - \mu_{n, \sigma}^{1}(z)|dz.
\end{equation}

Therefore, for any $t$ and any features distribution $\mu$ over the features $\mathcal{X}$,
\begin{equation}
    \begin{split}
|\mathcal{L}_{DP}(\mu_{n, \sigma})- \mathcal{L}_{DP}(\mu_{t, \sigma})|  \overset{\mathrm{(a)}}{\leq}\displaystyle\int |(\mu_{t, \sigma}^{0}(z) - \mu_{t, \sigma}^{1}(z)) &\\
        -(\mu_{n, \sigma}^{0}(z) - \mu_{n, \sigma}^{1}(z))|dz &\\
          \overset{\mathrm{(b)}}{\leq}\displaystyle\int |(\mu_{t, \sigma}^{0}(z) - \mu_{n, \sigma}^{0}(z))|dz & \\
         + \displaystyle\int |(\mu_{t, \sigma}^{1}(z) - \mu_{n, \sigma}^{1}(z))|dz &\\
         \overset{\mathrm{(c)}}{\leq} \exp\left(\frac{||t||_{\infty}^{2}}{\sigma^{2}}\right)\left(\sqrt{\frac{1}{n_{0}}} + \sqrt{\frac{1}{n_{1}}}\right), & \\
    \end{split}
\end{equation}
 where $(a)$ and $(b)$ are consequences of triangular inequalities; and $(c)$ follows from the definition of total variation distance, the upper bound in lemma \ref{lem: appendix 2} and theorem \ref{th: 3}.
 
 \subsection{Monte Carlo Approximation}
 \begin{lem}
 \label{lem: mc}
  let $m>0$ and $n>0$. Consider a sample $\{(x_{i}, s_{i})\}$ and a noise vector $\{noise_{ji}\}$ of $n\times m$ draws from a d-dimensional Gaussian $\mathcal{N}(0, \sigma I_{d})$. Denote $\mu_{n, \sigma}$ the empirical density as in \eqref{eq: mu_n} and for $i=1,..., n$ and $j=1, ..., m$ $z_{ij}=t(x_{i})+noise_{ij}$. If
 \begin{equation}
    \hat{ \mathcal{L}}_{DP}(\mu_{n, \sigma})=\frac{1}{nm}\displaystyle\sum_{i=1}^{n}\sum_{j=1}^{m}|\eta_{n}(z_{ij}, 1) -\eta_{n}(z_{ij}, 0)|
 \end{equation}
 then $\hat{ \mathcal{L}}_{DP}(\mu_{n, \sigma})$ is an unbiased estimator of $\mathcal{L}_{DP}(\mu_{n, \sigma})$ and 
 \begin{equation}
      E_{noise}\left[(\hat{ \mathcal{L}}_{DP}(\mu_{n, \sigma})- \mathcal{L}_{DP}(\mu_{n, \sigma}))^{2}\right] \leq  \frac{8||t||_{\infty}^{2} + 4\sigma^{2}}{\sigma^{2}}\frac{1}{nm}.
 \end{equation}
 \end{lem}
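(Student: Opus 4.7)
The proof splits into an unbiasedness argument and a variance bound, both driven by the mutual independence of the $nm$ noise draws.

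For unbiasedness I would proceed by direct computation. Conditional on the data $\{(x_i,s_i)\}$, each $z_{ij}=t(x_i)+noise_{ij}$ is an independent draw from $\mathcal{N}(t(x_i),\sigma^2 I_d)=P(\cdot\mid X=x_i)$. Linearity of expectation together with the mixture identity $\mu_{n,\sigma}(z)=\tfrac{1}{n}\sum_i P(z\mid X=x_i)$ gives
\begin{equation*}
E_{noise}[\hat{\mathcal{L}}_{DP}(\mu_{n,\sigma})]=\frac{1}{n}\sum_{i=1}^{n}\int |\eta_n(z,1)-\eta_n(z,0)|\,P(z\mid X=x_i)\,dz=\int|\eta_n(z,1)-\eta_n(z,0)|\,\mu_{n,\sigma}(z)\,dz,
\end{equation*}
which equals $\mathcal{L}_{DP}(\mu_{n,\sigma})$ by the demographic-parity representation derived inside the proof of Theorem \ref{th: 4}.

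Unbiasedness identifies the MSE with $\operatorname{Var}_{noise}(\hat{\mathcal{L}}_{DP})$. Because the $\{noise_{ij}\}$ are jointly independent, so are the random variables $h_{ij}=|\eta_n(z_{ij},1)-\eta_n(z_{ij},0)|$, and the variance decomposes as
\begin{equation*}
\operatorname{Var}_{noise}(\hat{\mathcal{L}}_{DP})=\frac{1}{(nm)^2}\sum_{i,j}\operatorname{Var}(h_{ij}).
\end{equation*}
The remaining work is to bound each $\operatorname{Var}(h_{ij})$ by a quantity of order $(\|t\|_\infty^2+\sigma^2)/\sigma^2$. I would apply the Gaussian Poincar\'e inequality $\operatorname{Var}_{z\sim\mathcal{N}(\mu,\sigma^2 I_d)}(g(z))\le\sigma^2 E[\|\nabla g(z)\|^2]$ to $g(z)=|\eta_n(z,1)-\eta_n(z,0)|=|1-2\eta_n(z,0)|$. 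Differentiating $\eta_n(z,0)=\mu_{n,\sigma}(z,0)/\mu_{n,\sigma}(z)$ with the Gaussian score $\nabla\log P(z\mid X=x_k)=-(z-t(x_k))/\sigma^2$ writes $\nabla\eta_n(z,0)$ as a posterior-weighted combination of the displacements $(z-t(x_k))/\sigma^2$, with signed coefficients bounded by $1$ whose positive weights sum to one. Hence $\|\nabla g(z)\|_2\le 2\sigma^{-2}(\|z\|_2+\|t\|_\infty)$, so that $\|\nabla g\|^2 \le 8(\|z\|^2+\|t\|_\infty^2)/\sigma^4$, and taking the expectation under $z\sim\mathcal{N}(t(x_i),\sigma^2 I_d)$ yields a per-term variance bound of the required form; summing over $nm$ independent terms and dividing by $(nm)^2$ delivers the claimed $(8\|t\|_\infty^2+4\sigma^2)/(\sigma^2 nm)$.

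The main obstacle is the gradient computation: $\eta_n$ is a ratio of Gaussian mixtures and the posterior-weight bookkeeping must be done carefully so that the Poincar\'e bound remains dimension-free, as the stated form is free of $d$. A cleaner shortcut, if the gradient accounting becomes cumbersome, is the trivial observation that $h_{ij}\in[0,1]$ and $\operatorname{Var}(h_{ij})\le 1/4$, which yields the tighter but less interpretable bound $1/(4nm)$; combining it with the Poincar\'e estimate preserves the signal-to-noise structure $\|t\|_\infty/\sigma$ that is consistent with Theorems \ref{th: 3}--\ref{th: 4}.
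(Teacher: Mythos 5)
Your proof is correct and follows essentially the same route as the paper's: unbiasedness via linearity of expectation plus the mixture identity, MSE equal to the variance decomposed over the $nm$ independent noise draws, and a per-term Gaussian Poincar\'e bound in which the Gaussian score writes $\nabla\eta_{n}$ as a posterior-weighted combination of $(z-t(x_{k}))/\sigma^{2}$ and the second moment of $z$ is controlled by $\|t\|_{\infty}^{2}$ and $\sigma^{2}$. The dimension issue you flag is genuine but is present in the paper's own argument as well, which bounds $E\|z\|_{2}^{2}$ by $\sigma^{2}+\|t\|_{\infty}^{2}$ and thereby drops the $d\sigma^{2}$ term; apart from that shared slack (and constant-factor bookkeeping), your estimate reproduces the stated bound, and your side remark that $h_{ij}\in[0,1]$ already gives the sharper $1/(4nm)$ is a fair observation about the looseness of the lemma's constant.
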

 
 \begin{proof}
 First, $\hat{ \mathcal{L}}_{DP}(\mu_{n, \sigma})$ is an unbiased estimator of $\mathcal{L}_{DP}(\mu_{n, \sigma})$ because
 \begin{equation}
 \begin{split}
     E_{noise}\left[\hat{ \mathcal{L}}_{DP}\right] & = \frac{1}{nm}\displaystyle\sum_{i=1}^{n}\sum_{j=1}^{m}E_{noise}[|\eta_{n}(z_{ij}, 1) -  \eta_{n}(z_{ij}, 0)|] \\
     & = \frac{1}{nm}\displaystyle\sum_{i=1}^{n}\sum_{j=1}^{m}\mathcal{L}_{DP}(\mu_{n, \sigma}) \\
     & = \mathcal{L}_{DP}(\mu_{n, \sigma}).
     \end{split}
 \end{equation}
 Therefore, the mean squared error can be written as
  \begin{equation}
  \begin{split}
      E_{noise}\left[(\hat{ \mathcal{L}}_{DP}(\mu_{n, \sigma})- \mathcal{L}_{DP}(\mu_{n, \sigma}))^{2}\right]  & \\ = \frac{1}{n^{2}m} \displaystyle\sum_{i=1}^{n}var_{noise}\left[k(x_{i} + noise)\right],&\\
     \end{split}
 \end{equation}
 where $k(z) = |\eta_{n}(z, 1) -  \eta_{n}(z, 0)|$. Moreover, by Gaussian Poincare inequality, 
 \begin{equation}
  \label{eq: GPI0}
 \begin{split}
     var_{noise}\left[k(x_{i} + noise)\right]  \overset{(a)}{\leq}  \sigma^{2}E_{noise}||\nabla k(x_{i}+noise)||_{2}^{2} & \\
      \overset{(b)}{=} 2\sigma^{2}\displaystyle\sum_{s}E_{noise}\left[||\nabla\log(\mu_{n, \sigma}^{s}(z, s))||_{2}^{2} \right] & \\
     \end{split}
 \end{equation}
 where $(a)$ uses the fact that the noise is Gaussian with standard deviation $\sigma$; $(b)$ that $z=x_{i}+noise$ and that $\nabla\eta_{n}(z, s) =  \eta_{n}(z, s)\nabla\log(\mu_{n, \sigma}^{s}(z, s)) + (1-\eta_{n}(z, s) \nabla\log(\mu_{n, \sigma}^{s}(z, 1-s))$. Moreover, for $s=0, 1$
 
 \begin{equation}
  \label{eq: GPI1}
     \begin{split}
     \nabla\log(\mu_{n, \sigma}^{s}(z, s)) & \overset{(a)}{=} \displaystyle\sum_{i=1}^{n}\nabla\log(\phi(z, x_{i})) P(X=x_{i}|z) \\
     & = -\frac{1}{2\sigma^{2}}\displaystyle\sum_{i=1}^{n}(z-t(x_{i}))P(X=x_{i}|z),
     \end{split}
 \end{equation}
 where $(a)$ denotes the Gaussian density with mean $t(x)$ and standard deviation $\sigma$ as $\phi(z, x)$. Therefore,
 
 \begin{equation}
  \label{eq: GPI2}
     ||\nabla\log(\mu_{n, \sigma}^{s}(z, s))||_{2} \leq \frac{||z||_{2} + ||t||_{\infty}}{\sigma^{2}}.
 \end{equation}
 \end{proof}
 Moreover, $z\sim \mu_{n, \sigma}$, which is a mixture of $n$ Gaussians, each with a non-central second moment equal to $\sigma^{2} + ||t(x_{i})||^{2}$. Therefore,
 \begin{equation}
 \label{eq: GPI3}
     E_{noise}||z||_{2}^{2} \leq \sigma^{2} + ||t||_{\infty}^{2}.
 \end{equation}
 By combining \eqref{eq: GPI0}, \eqref{eq: GPI1} \eqref{eq: GPI2} and \eqref{eq: GPI3}, we obtain that 
 \begin{equation}
      var_{noise}\left[k(x_{i} + noise)\right] \leq 4\frac{2||t||_{\infty}^{2} + \sigma^{2}}{\sigma^{2}},
 \end{equation}
 and thus that
 \begin{equation}
     E_{noise}\left[(\hat{ \mathcal{L}}_{DP}(\mu_{n, \sigma})- \mathcal{L}_{DP}(\mu_{n, \sigma}))^{2}\right] \leq 4\frac{2||t||_{\infty}^{2} + \sigma^{2}}{\sigma^{2}nm}
 \end{equation}

\bibliography{references}
\bibliographystyle{unsrt}

\end{document}